\newtheorem{proposition}{\textbf{Proposition}}[section]
\newtheorem{proof}{\textbf{Proof}}[section]
\newcommand*\blackcircled[1]{\tikz[baseline=(char.base)]{
        \node[shape=circle,fill={rgb,255:red,0;green,0;blue,0}, text=white, font=\small, inner sep=0.6pt] (char) {#1};}}
\def\BibTeX{{\rm B\kern-.05em{\sc i\kern-.025em b}\kern-.08em
    T\kern-.1667em\lower.7ex\hbox{E}\kern-.125emX}}
\begin{document}

\title{Rethinking Efficiency and Redundancy in \\ Training Large-scale Graphs
}


\author{\IEEEauthorblockN{Xin Liu\IEEEauthorrefmark{1}\IEEEauthorrefmark{2},
Xunbin Xiong\thanks{1 Xin Liu and Xunbin Xiong contribute equally.}\IEEEauthorrefmark{3}, 
Mingyu Yan\thanks{2 Mingyu Yan is the corresponding author.}\IEEEauthorrefmark{1}\IEEEauthorrefmark{2}, 
Runzhen Xue\IEEEauthorrefmark{1}\IEEEauthorrefmark{2}, 
Shirui Pan\IEEEauthorrefmark{4}, 
Xiaochun Ye\IEEEauthorrefmark{1}\IEEEauthorrefmark{2},
and
Dongrui Fan\IEEEauthorrefmark{1}\IEEEauthorrefmark{2}}
\IEEEauthorblockA{\IEEEauthorrefmark{1}SKLP, Institute of Computing Technology, Chinese Academy of Sciences, China\\
\IEEEauthorrefmark{2}University of Chinese Academy of Sciences, China\\
\IEEEauthorrefmark{3}School of Information Science and Technology, ShanghaiTech University, China\\
\IEEEauthorrefmark{4}School of ICT, Griffith University, Australia}
}

\maketitle

\begin{abstract}

Large-scale graphs are ubiquitous in real-world scenarios and can be trained by Graph Neural Networks (GNNs) to generate representation for downstream tasks. Given the abundant information and complex topology of a large-scale graph, we argue that redundancy exists in such graphs and will degrade the training efficiency. Unfortunately, the model scalability severely restricts the efficiency of training large-scale graphs via vanilla GNNs. Despite recent advances in sampling-based training methods, sampling-based GNNs generally overlook the redundancy issue. It still takes intolerable time to train these models on large-scale graphs. Thereby, we propose to drop redundancy and improve efficiency of training large-scale graphs with GNNs, by rethinking the inherent characteristics in a graph.

In this paper, we pioneer to propose a once-for-all method, termed DropReef, to drop the redundancy in large-scale graphs. Specifically, we first conduct preliminary experiments to explore potential redundancy in large-scale graphs. Next, we present a metric to quantify the neighbor heterophily of all nodes in a graph. Based on both experimental and theoretical analysis, we reveal the redundancy in a large-scale graph, i.e., nodes with high neighbor heterophily and a great number of neighbors. Then, we propose DropReef to detect and drop the redundancy in large-scale graphs once and for all, helping reduce the training time while ensuring no sacrifice in the model accuracy. To demonstrate the effectiveness of DropReef, we apply it to recent state-of-the-art sampling-based GNNs for training large-scale graphs, owing to the high precision of such models. With DropReef leveraged, the training efficiency of models can be greatly promoted. DropReef is highly compatible and is offline performed, benefiting the state-of-the-art sampling-based GNNs in the present and future to a significant extent.

\end{abstract}

\begin{IEEEkeywords}
graph neural networks, efficiency, redundancy, graph sampling
\end{IEEEkeywords}

\section{Introduction}

Learning graph data has become a hot spot of recent research and has drawn increasing attention in the deep learning domain. To capture information from complex graphs, various methods are abundantly proposed, among which Graph Neural Networks (GNNs) \cite{scarselli2008graph} are superior exemplars for tackling diverse tasks \cite{graphsage,gnn_app2,gnn_app4,gnn_app_security}. Owing to the outstanding performance and explainability of GNN \cite{pope2019explainability,zhang2022trustworthy}, recent years have witnessed the emergence of diverse GNN variants, e.g., Graph Convolutional Networks (GCNs) \cite{kipf2016semi}, Graph Attention Networks (GATs) \cite{velickovic2018graph}, and Graph Isomorphism Networks (GINs) \cite{xu2018how}. 

However, it generally takes nontrivial efforts to train a well-expressive GNN.
Conventionally, training of GNNs, especially GCNs, is generally performed in a full-batch manner \cite{clustergcn}, which introduces undesirable training efficiency and storage consumption \cite{kipf2016semi}. Moreover, as the scale of real-world graph data rapidly grows by the day, the conventional training method cannot even work properly because of the ``out of memory'' (OOM) issue. Thereby, efforts have been made to improve the training efficiency of GNNs \cite{survey1,survey2}. Notably, sampling-based GNN models \cite{graphsage,fastgcn,asgcn,LADIES,RWT,clustergcn,graphsaint,AC-sampling,MV-sampling,Bandit-sampling,Biased-sampling,PGSampling} are emphatically studied and focused on by researchers, owing to their great capacity of training graphs in an efficient manner. Taking a graph as the input, sampling-based models utilize a well-designed sampler to acquire a part of the whole graph by step, i.e., a subgraph, and train the subgraphs in a mini-batch manner, instead of directly performing training on the original one. Overall, sampling-based training methods ensure an acceptable convergence rate and the training efficiency.

Unfortunately, the ever-growing graph scale has been challenging the efficiency of the sampling-based training method. Previously, typical graph-related tasks such as node classification are always conducted on small datasets \cite{smalldataset}, 
resulting in a limited power in terms of model scalability. Recent literature proposes to train well-expressive models on large-scale graphs \cite{clustergcn,graphsaint}. Nevertheless, it takes hundreds or thousands of times longer on a large graph to yield minor promotion in terms of accuracy than on a small one.
In addition, previous investigations \cite{Pagraph,survey3} have discovered that the sampling process is becoming a bottleneck to the training of GNNs, which further results in low-efficiency training of large-scale graphs. 
Thereby, a question arises: \textit{What exactly degrades \textbf{efficiency} of training large-scale graphs}?

We answer that: \textit{\textbf{Redundancy} in large-scale graphs can slow down the training of GNNs}. We further argue that, \textit{\textbf{dropping} \textbf{redundancy} in large-scale graphs can significantly \textbf{increase the efficiency} of training large-scale graphs with GNNs}. 
In the graph-related domain, previous literature \cite{robustGNN1,redundancy-GNN1,robustGNN2,zhang2022understanding,robustGNN3,robustGNN4} proposes to detect and reduce redundant information (also regarded as noise) from a graph to promote the model performance or yield a robust model for facing adversarial attacks. However, none of a large-scale graph is considered or applied for analysis, owing to its inherent complexity. In addition, most of the literature \cite{robustGNN1,robustGNN2,robustGNN3} pays close attention to the model accuracy and omits the potential of improving efficiency. 
Thus, we propose to accelerate the training of large-scale graphs with GNNs by dropping redundancy. 
Moreover, considering the diversity of GNN variants, we believe that a well-designed technique with high compatibility can benefit state-of-the-art models in the present and future better, instead of designing a completely new GNN. Thereby, all we need is a once-for-all method to detect and drop redundancy in large-scale graphs while ensuring no sacrifice in the model accuracy.

In this paper, we pioneer to propose \textbf{DropReef}, a novel method to \underline{\textbf{Drop}} the \underline{\textbf{Re}}dundancy in large-scale graphs onc\underline{\textbf{e}} and \underline{\textbf{f}}or all, helping improve the efficiency of training large-scale graphs with GNNs. 
Specifically, we first conduct preliminary experiments to explore potential redundancy in large-scale graphs. Next, we propose to quantify the neighbor heterophily of all nodes in a graph by a novel metric, termed weighted neighbor heterophily (WNH), according to the fact that nodes with a heterophilic neighboring distribution are hard to classify accurately by vanilla GNNs. Based on the analysis, we argue that redundancy in a large-scale graph is nodes with high WNH and a great number of neighbors. Finally, we present DropReef to detect and drop the redundancy in large-scale graphs once and for all. Nodes with high WNH and a great number of neighbors are regarded as redundant nodes and are dropped from the training set, thus yielding a significant acceleration in training with no sacrifice in the model accuracy. 

Contributions of this paper are summarized as follows:
\begin{itemize}
\item We first conduct preliminary experiments to quantify the neighbor distribution and explore the potential \textit{\textbf{redundancy}} (i.e., information that is of no benefit to the model training) in large-scale graphs. Experimental results indicate two findings: the neighbor distribution of nodes is generally imbalanced, and redundancy exists in node regions that are densely connected. Then, we present a novel metric, termed weighted neighbor heterophily, to quantify the neighbor heterophily for nodes in a graph. Based on both experimental and theoretical analysis, we reveal the \textit{\textbf{redundancy}} in a large-scale graph, i.e., nodes with high neighbor heterophily and a great number of neighbors. 

\item We propose DropReef, a once-for-all method to drop the \textbf{\textit{redundancy}} in large-scale graphs and improve the \textit{\textbf{efficiency}} of training large-scale graphs with GNNs. DropReef is easy to use and can be offline performed. One can directly train the large-scale graphs that have been processed by DropReef to witness a considerable acceleration in terms of GNNs training straightforwardly.

\item We conduct experiments on four real-world graphs in large scale and apply DropReef to three state-of-the-art sampling-based GNNs. Experimental results have demonstrated the effectiveness of DropReef in terms of efficiency and accuracy. DropReef is highly compatible and helps reduce on average 26.80\% and up to 83.01\% of the training time among four large-scale graphs while ensuring no sacrifice in the model accuracy.
\end{itemize}

\section{Preliminary and Motivation}
In this section, we first introduce the fundamental of GNNs and sampling-based training methods. Next, we make an observation on the topology of large-scale graphs and argue that the potential redundancy heavily exists in such graphs. Then, we put forward our motivation and highlight the opportunity to improve the training efficiency, revealed by preliminary experiments.

\subsection{Fundamental of GNNs}

GNNs are prevalent deep learning models for handling graph-related tasks \cite{survey4}. Studies \cite{gnn_app3,graphsage,xu2018how,graphsaint,gnn_prediction} have previously revealed the superior performance of GNNs on tasks of classification and prediction. In general, GNN models utilize the same computing pattern to learn the hidden information from a graph, despite diverse variants of GNNs. Herein, we take a widely used model, i.e., GCN \cite{kipf2016semi}, as an exemplar for the formulated introduction. Given the adjacency matrix \textbf{A} and the initial feature matrix \textbf{H}$^0$ as the input, GCN captures the hidden representation from a graph and propagates the information by layers in an iterative manner. The forward propagation is formulated as follows:
\begin{equation} \label{Eq1} 
    \textbf{H}^{l} = \sigma \left( \widetilde{\textbf{D}}^{-\frac{1}{2}} \widetilde{\textbf{A}} \widetilde{\textbf{D}}^{-\frac{1}{2}} \textbf{H}^{l-1} \textbf{W}^{l-1} \right)
\end{equation}
where $\widetilde{\textbf{A}}$ denotes the normalized adjacency matrix, and $\widetilde{\textbf{D}}$ denotes the degree matrix of $\widetilde{\textbf{A}}$. $\sigma$ denotes a nonlinear activation function. \textbf{W}$^{l-1}$ and \textbf{H}$^{l-1}$ denote the learnable weight matrix and the hidden feature matrix in the (\textit{l}-1)-th layer, respectively. To sum up, training of GNNs consists of the forward propagation for learning the representation (as given in Equation \eqref{Eq1}) and the backward propagation for updating the model parameters (i.e., \textbf{W}).

\subsection{Sampling-based Training Methods}
Learning the representation from a large-scale graph takes nontrivial cost in terms of computation and storage under the propagation pattern given in Equation \eqref{Eq1} since the hidden feature \textbf{H} in each layer is computed based on $\widetilde{\textbf{A}}$ of the whole huge graph. 
Thereby, sampling-based training methods \cite{graphsage,fastgcn,asgcn,LADIES,RWT,clustergcn,graphsaint,AC-sampling,MV-sampling,Bandit-sampling,Biased-sampling,PGSampling} are proposed to diminish the original large graph to smaller ones and feed them to a GNN in a batched manner, thus reducing the computation and storage cost of GNN training.
Classical sampling-based training methods propose to select nodes from all neighbors of one node in a random manner \cite{graphsage}, or select nodes from one node batch according to the pre-computed sampling distribution \cite{fastgcn}. 

\begin{figure*}[t]
\centering
\includegraphics[width=\textwidth]{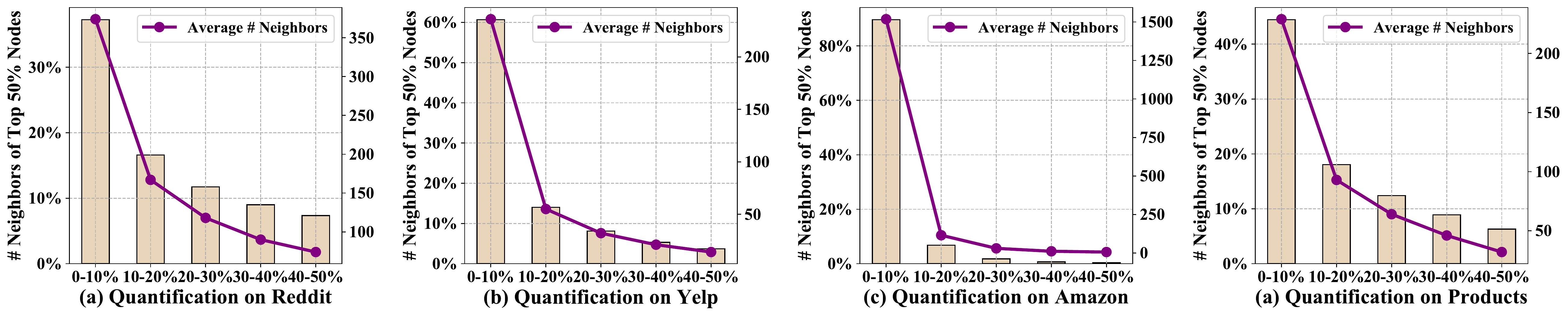}
\caption{Quantification on four large-scale graphs. Please note that we sort the nodes by their neighbors' amount and pick the \textit{top 50\%} ones for analysis. In each subplot, the selected \textit{top 50\%} of nodes are divided into quintiles. For example, ``0-10\%'' in the \textbf{x-axis} denotes the top quintile of nodes in the \textit{top 50\%} of nodes. Bars in each subplot are the ratio of the number of neighbors (abbr. \# Neighbors) of the \textit{top 50\%} of nodes to the number of neighbors of total nodes in a graph. The curves in each subplot are the quantification of the average number of neighbors of the \textit{top 50\%} of nodes.}
\label{fig1}
\end{figure*}

Nevertheless, such classical methods treat each node identically during sampling and omit the correlation between nodes being sampled in general. Moreover, experiments in such methods are generally conducted on small graphs.
As the improvement, recent literature \cite{clustergcn,PGSampling,graphsaint} proposes to extend GNN training to larger graphs and deeper model structures. They sample subgraphs and construct mini-batches based on the sampled ones. Training of GNN is subsequently conducted on these mini-batches. Given an un-directed graph $\mathcal{G}(\mathcal{V}, \mathcal{E})$, the above process can be formulated as follows:
\begin{equation}
    Node~Set:~N \leftarrow \texttt{\textbf{Sample}}(\mathcal{V}, budget)
\end{equation}
\begin{equation}
    Subgraph:~\mathcal{G}_s(\mathcal{V}_s, \mathcal{E}_s) \leftarrow \texttt{\textbf{Construct}}(N_1, N_2, \cdots)
\end{equation}
where $N$ denotes a node set that is sampled from the original graph $\mathcal{G}$. The size of $N$ is restricted to $budget$. Then, in each mini-batch, a subgraph $\mathcal{G}_s$ is constructed based on the sampled node sets and is fed for subsequent training. These subgraph-based sampling methods are generally superior in accuracy since they consider the correlation between nodes when constructing a subgraph \cite{graphsaint}. The sampled subgraphs are dense and have many frequently accessed nodes (i.e., nodes with a lot of neighbors), which, however, can introduce redundancy in sampling as well as training of GNNs.


\subsection{Observations on Potential Redundancy}

Large-scale graphs are complex in terms of graph topology, making the performance of GNN on such graphs distinct from small ones \cite{OGB}.
Specifically, in large-scale graphs, dense local communities \cite{chen2013detecting} are more easily formed by node regions in which nodes are densely connected to other nodes in a neighboring region, compared to small graphs. In such regions, nodes, especially in a central position, share a great number of edges with other neighboring nodes \cite{clauset2005finding}, yielding many high-degree nodes in large-scale graphs. 

\noindent \textbf{Definition 1} \textit{High-degree nodes (abbr. HD nodes) are nodes with a great number of direct neighbors (one-hop neighbors).}

Generally, abundant information exists in such large-scale graphs, and so does the redundancy (i.e., information that is of no benefit to the model training). HD nodes in large-scale graphs result in an imbalanced neighbor distribution. 
Since HD nodes are sampled with a high probability, it takes nontrivial cost to construct subgraphs based on HD nodes and their neighbors, given their densely connected patterns. Therefore, we propose to analyze such densely connected nodes and quantify the potential redundancy.


We quantify the overall neighbor distribution in large-scale graphs, i.e., Reddit \cite{graphsage}, Yelp \cite{graphsaint}, Amazon \cite{graphsaint}, and OGBN-Products (abbr. Products) \cite{OGB}. Detailed information about datasets is given in Table \ref{tab1}.
We first sort nodes in a graph by their neighbors' amount. Then, we quantify the ratio of neighbors' amount of the HD nodes, more specifically, the \textit{top 50\%} of nodes with a great number of neighbors, to the neighbors' amount of the total nodes in a graph.  
Please note that total neighbors' amount of all nodes in a graph generally far exceeds the total number of nodes in a statistical sense, meaning that different nodes might have a great number of the same neighbors. The quantification allows us to analyze node regions that are possibly dense in a fine-grained manner. As illustrated in Figure \ref{fig1}, the neighbors' amount of the \textit{top 10\%} of nodes dominates a considerable ratio in all large-scale graphs, and is appreciably higher than nodes in other parts. The \textit{top 10\%} of nodes are HD nodes, concluding that the distribution of the node connected density is imbalanced in a graph and is centralized in regions composed of HD nodes.
These observations indicate that subgraphs constructed by such HD nodes are generally dense. We suppose that such dense subgraphs have a higher probability of containing redundancy and bring about greater overhead than ones in a sparse pattern. Thereby, our supposition can be proved if the model accuracy does not decline significantly after dropping HD nodes.

\noindent \textbf{Remark} In this paper, only the one-hop neighbors of a node are considered. 

\begin{figure}[t]
\centering
\includegraphics[width=0.49\textwidth]{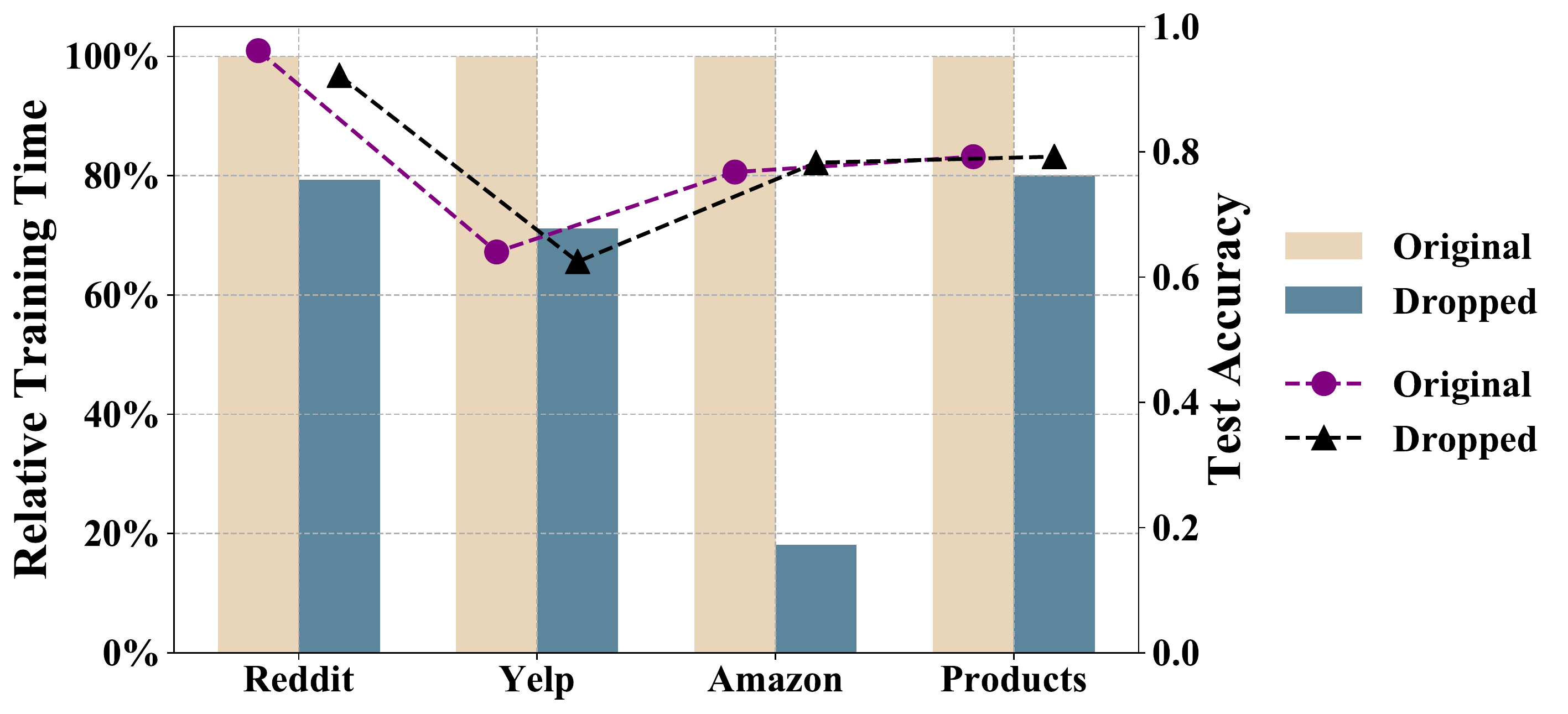}
\caption{Comparison on the training time and the test accuracy among four large-scale graphs, with GraphSAINT used as the backbone. Please note that the training process consists of sampling and pure model training processes. The sampling process is accomplished by a random node sampler. Bars denote the relative training time, while dotted lines denote the variation of the test accuracy. \textbf{Original} denotes the model is normally trained, while \textbf{Dropped} denotes that a naive drop method is performed on datasets before they are fed to the model. Training configurations between \textbf{Original} and \textbf{Dropped} are completely consistent to the official configurations in GraphSAINT.}
\label{fig2}
\end{figure}

\subsection{Opportunities to Improve Efficiency} \label{sec:2.3}

Preliminary experiments are conducted to highlight the influence of dropping HD nodes on four large-scale graphs. Naively, we drop the \textit{top 5\%} of training nodes on each dataset before they are fed to the model for training. We compare the \textbf{training time} and the \textbf{test accuracy} between the original model training and training with the naive drop method equipped. The only difference is that a part of HD nodes are dropped in datasets in the latter case. As illustrated in Figure \ref{fig2}, we find that the acceleration of the training time is nontrivial. Dropping HD nodes reduced 80\% of the training time on Amazon dataset, and average 25\% of training time on the other three datasets. Nevertheless, the significant reduction in the training time does not bring about a dramatic decline in the model accuracy. For example, the accuracy decline ratio on Reddit dataset is 4.12\%.

According to the preliminary experiments, dropping a part of HD nodes from the total training nodes can generally reduce the time cost of training a large-scale graph, especially for graphs in which HD nodes concentrate in the \textit{top 10\%} of nodes (e.g., Yelp and Amazon datasets). In addition, the decline in the model accuracy is slight on all datasets, which can be attributed to the fact that \textbf{the dropped HD nodes contain redundant information to a large extent}. Motivated by the above findings, we argue that designing a more precise drop method to remove HD nodes containing redundant information can attach both efficient training and trivial sacrifice in model accuracy, instead of naively dropping the \textit{top 5\%} of nodes. And it will be best to have this method performed once and for all, avoiding the occupation of online computation and storage resources. Therefore, all we need is an accurate drop method that can be offline performed to detect and decrease redundancy in large-scale graphs.
\section{Methodology}
In this section, we first define a metric, termed weighted neighbor heterophily, to quantify the neighbor heterophily for nodes in a graph. Based on the proposed metric, we define the data redundancy and redundant nodes in a graph accordingly. Next, we propose DropReef, a once-for-all method to detect and drop the redundant nodes in large-scale graphs. DropReef consists of three subprocesses and is performed offline. After processing by DropReef, the training efficiency on large-scale graphs is considerably promoted. 

\subsection{The Proposed Metric}
\subsubsection{Neighbor Heterophily} This subsubsection introduces a key component of the metric, i.e., neighbor heterophily.

In real-world graphs, homophily is a basic principle to describe the distribution of nodes' classes that connected nodes within a region generally belong to the same class \cite{mcpherson2001birds}. A straightforward example that a person shares common interests and habits with his friends can well describe this ``birds of a feather flock together'' phenomenon \cite{Networks}. We further derive a connected pattern for a neighbor region from this phenomenon. As illustrated in Figure \ref{fig3}(a), a homophilic graph is obtained by treating a person and his friends (with the same interests and habits) as nodes in the same class. In another case, given a node \textit{v} that is densely connected with its neighbors, labels of \textit{v}'s neighbors are diverse in which major of the neighbors' classes is different from the class to which \textit{v} belongs. The above connected pattern, which is given in Figure \ref{fig3}(b), is an unusual yet possible case in an unrestricted graph, despite the homophily principle. 
However, many existing GNNs, e.g., GCN \cite{kipf2016semi} and LGNN \cite{LGNN}, are designed under a strong assumption of homophily \cite{beyond_homo}, and trends to overfit the majority classes, rendering undesirable learning representation on minority ones \cite{MulC-imbalanced}. Therefore, supposing that a GNN is trained for node classification, \textbf{node \textit{v} is harder to be accurately classified than the node \textit{m}} (given their connected patterns in Figure \ref{fig3}).



\begin{figure}[t]
\centering
\includegraphics[width=0.45\textwidth]{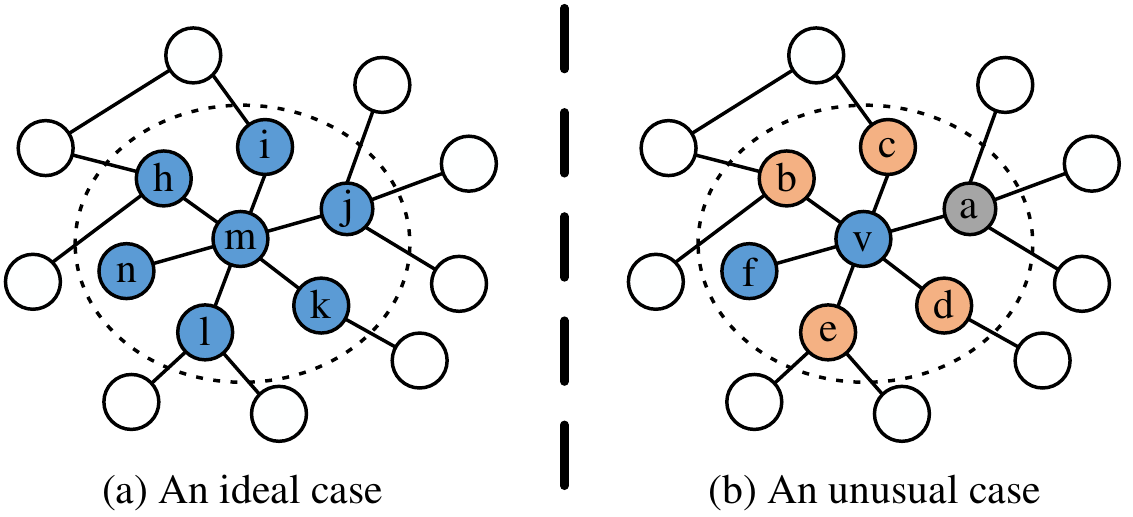}
\caption{An illustration of two distributions of nodes' classes. Please note that colors represent classes, and nodes in the same color belong to the same class.}
\label{fig3}
\end{figure}

\noindent \textbf{Definition 2} \textit{Neighbor heterophily is measurable value to reflect the heterophily for a node within its neighboring region.}

We argue that nodes with heterophilic neighbors can hardly be of benefit to the model performance. And such nodes can be detected by calculating the difference between them and their neighbors. 
Recent literature \cite{beyond_homo} presented a metric to empirically reflect the edge homophily ratio for a whole graph from a global perspective. In our case, we propose to calculate the difference between a node and its neighbors from a local perspective by quantifying the neighbor heterophily within its neighboring region, considering that nodes with high neighbor heterophily are in a fairly smaller proportion than normal ones.
The neighbor heterophily $Hete_{v}$ for a given node \textit{v} can be quantified as follows: 
\begin{equation} \label{Eq4}
    Hete_{v} = \frac{1}{D_v}\sum_{u \in N(v)}\Vert \mathbf{c}_v - \mathbf{c}_u\Vert_2
\end{equation}
where $D_v$ and $N(v)$ denote the degree and the neighbor set of node $v$, respectively. ${\mathbf{c}_v \in \mathbb{R}^{N_c}}$ denotes a label vector extended by the class to which node $v$ belongs, where $N_c$ denotes the number of all possible classes of nodes.

\begin{proposition} \label{prop1}
For node $v$, $Hete_v$ is associated with the distribution of nodes' classes in $v$'s neighboring region. A higher $Hete_v$ indicates a more heterophilic neighboring distribution of $v$. In the task of node classification, the upper bound of $Hete_v$ in single-class classification is $\sqrt{2}$ and in multi-class classification is $\sqrt{N_c}$.
\end{proposition}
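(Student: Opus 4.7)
The plan is to verify the two bounds by directly analyzing the summand $\Vert \mathbf{c}_v - \mathbf{c}_u\Vert_2$ under the two label encodings, and then reason about the average. I would begin by making explicit the structure of the label vector $\mathbf{c}_v \in \mathbb{R}^{N_c}$ implied by the problem setting: in single-class classification, $\mathbf{c}_v$ is a one-hot vector with a single $1$ in the position of $v$'s class, whereas in multi-class (multi-label) classification, $\mathbf{c}_v$ is a multi-hot vector in $\{0,1\}^{N_c}$. Under either encoding, $\Vert \mathbf{c}_v - \mathbf{c}_u\Vert_2^2$ equals the Hamming distance between the two binary vectors, so each summand is a non-negative quantity that vanishes exactly when $u$ and $v$ carry identical label vectors and grows with the number of coordinates on which they disagree. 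This immediately establishes the monotonicity claim: averaging over $N(v)$, a larger $Hete_v$ reflects a strictly larger amount of label disagreement between $v$ and its neighbors, i.e., a more heterophilic neighborhood.

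Next I would establish the two upper bounds by bounding each summand and using that the average cannot exceed the maximum. For single-class classification, if $v$ and $u$ share a class the summand is $0$, and otherwise the one-hot vectors $\mathbf{c}_v,\mathbf{c}_u$ differ in exactly two coordinates, so
\begin{equation}
\Vert \mathbf{c}_v - \mathbf{c}_u\Vert_2 \le \sqrt{2},
\end{equation}
with equality iff $u \notin [v]$. Therefore
\begin{equation}
Hete_v = \frac{1}{D_v}\sum_{u \in N(v)} \Vert \mathbf{c}_v - \mathbf{c}_u\Vert_2 \le \sqrt{2},
\end{equation}
and the bound is tight (achieved when every neighbor of $v$ belongs to a different class than $v$). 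For multi-class classification, since both $\mathbf{c}_v$ and $\mathbf{c}_u$ lie in $\{0,1\}^{N_c}$, they can differ in at most $N_c$ coordinates, giving $\Vert \mathbf{c}_v - \mathbf{c}_u\Vert_2 \le \sqrt{N_c}$; equality is achievable by taking $\mathbf{c}_u$ as the bitwise complement of $\mathbf{c}_v$, which is a valid label vector. Averaging again yields $Hete_v \le \sqrt{N_c}$, with tightness when every neighbor's label vector is the complement of $\mathbf{c}_v$.

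The final step is to tie the two parts together and note that the single-class bound is just the specialization of the multi-class bound to vectors of Hamming weight exactly one (where the maximal pairwise Hamming distance is $2$ rather than $N_c$). Since the whole argument is a bound-by-bound comparison followed by an average, the proof is almost entirely bookkeeping; the only point that requires a little care is ensuring that in the multi-class case the extremal vector (the bitwise complement of $\mathbf{c}_v$) is admissible under whatever label conventions are in force, and that in the single-class case the two bounds are distinguished by the constraint of unit Hamming weight. I do not anticipate a significant obstacle beyond being explicit about these encoding conventions.
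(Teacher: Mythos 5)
Your proposal is correct and follows essentially the same route as the paper's proof: bound the per-neighbor term $\Vert \mathbf{c}_v - \mathbf{c}_u\Vert_2$ by $\sqrt{2}$ (one-hot) or $\sqrt{N_c}$ (multi-hot) and average over the $D_v$ neighbors, with the heterophily monotonicity read off from the count of disagreeing neighbors. If anything, your treatment of the multi-class extremal case (equality exactly when $\mathbf{c}_u$ is the bitwise complement of $\mathbf{c}_v$, subject to that vector being an admissible label) is slightly more precise than the paper's statement that it suffices for $v$ and its neighbors to pairwise belong to different classes, but this is a refinement of the same argument rather than a different one.
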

\begin{proof}
For single-class node prediction, assuming that node $v$ and its neighbor $u$ belong to different classes, we can simply derive the 2-Norm of ($\mathbf{c}_v - \mathbf{c}_u$):
\begin{equation} \label{Eq5}
    \Vert \mathbf{c}_v - \mathbf{c}_u\Vert_2 = \sqrt{2}
\end{equation}
Next, we denote \textbf{Diff} as the number of v's neighbors that belong to different classes from v, and calculate $Hete_v$ according to Equation \eqref{Eq4}:
\begin{equation} \label{Eq6}
    \begin{split}
    Hete_{v} &= \frac{1}{D_v}\sum_{u \in N(v)}\Vert \mathbf{c}_v - \mathbf{c}_u\Vert_2 \\
    &= \sqrt{2} \frac{\textbf{Diff}}{D_v}
    \end{split}
\end{equation}
Based on Equation \eqref{Eq6}, $Hete_v$ is positively correlated with \textbf{Diff} since D$_v$ is a constant when the node v is given. Therefore, a more heterophilic neighboring distribution of v indicates a larger \textbf{Diff}, thus boosting $Hete_v$ accordingly. It is also observed that the upper bound of $Hete_v$ is $\sqrt{2}$ since $\frac{\textbf{Diff}}{D_v}$ will not exceed 1.

Proof in a multi-class situation is same to the single-class one. Assuming that node $v$ and its neighbor $u$ belong to different classes, we denote \textbf{Max} as the maximum of $\Vert \mathbf{c}_v - \mathbf{c}_u\Vert_2$, and give the upper bound of $Hete_v$ as follows:
\begin{align}
    Hete_{v} &= \frac{1}{D_v}\sum_{u \in N(v)}\Vert \mathbf{c}_v - \mathbf{c}_u\Vert_2 \le \textbf{Max} \frac{\textbf{Diff}}{D_v}
\end{align}
Possible maximums of \textbf{Max} and \textbf{Diff} can be simultaneously acquired in one particular case: Any two of v and its neighbors do not belong to the same class. In this case, we can derive the value of \textbf{Max} as $\sqrt{N_c}$ and \textbf{Diff} as D$_v$, thus inferring the upper bound of $Hete_v$ as $\sqrt{N_c}$.

\end{proof}

\subsubsection{Linking Probability} This subsubsection introduces a key component of the metric, i.e., linking probability.

In social networks, link prediction is a task to predict the existence of the edge between two entity nodes. In this task, a classifier is learned to mark edges with true or false labels \cite{LinkPredSurvey,LinkPred_1}. In the GNN-related domain, a GNN model is trained to yield linking probabilities associated with edges by ingesting features and the adjacency matrix of the target graph \cite{LinkPred_2,VGAE}. The generated linking probabilities are regarded as the existence probabilities for unobserved edges between nodes in general. Considering a case of sampling subgraphs, a designed sampler iteratively samples nodes before the nodes' amount exceeds the budget. Subsequently, a subgraph is constructed by recovering existing edges between the sampled nodes. Therefore, edges are unobserved for the sampler before the subgraph is built accordingly.
Inspired by the weighted message passing in GATs \cite{velickovic2018graph}, we propose to add linking probabilities to $Hete_v$ since each neighbor has a unique impact on node $v$. A demonstration will be given in Section \ref{sec:3.1.3} to reveal that $Hete_v$ with linking probabilities added is more explicable than a naive $Hete_v$ for exceptive cases.

\subsubsection{Definition of the Metric} \label{sec:3.1.3}

We propose a metric, termed weighted neighbor heterophily (WNH), to reflect the neighbor heterophily for a given node v with all individual edges between v and its neighbors considered simultaneously. The proposed metric is calculated as follows:

\begin{equation} \label{Eq8}
    WNH_{v} = \frac{1}{D_v}\sum_{u \in N(v)} p_{vu} \cdot \Vert \mathbf{c}_v - \mathbf{c}_u\Vert_2
\end{equation}
where $p_{vu}$ denotes the linking probability between nodes $v$ and $u$. As the improvement of the naive neighbor heterophily given in Equation \eqref{Eq4}, we add linking probabilities served as the weight into the proposed metric, making the proposed metric more explicable than a naive one for exceptive cases.

\begin{figure}[t]
\centering
\includegraphics[width=0.5\textwidth]{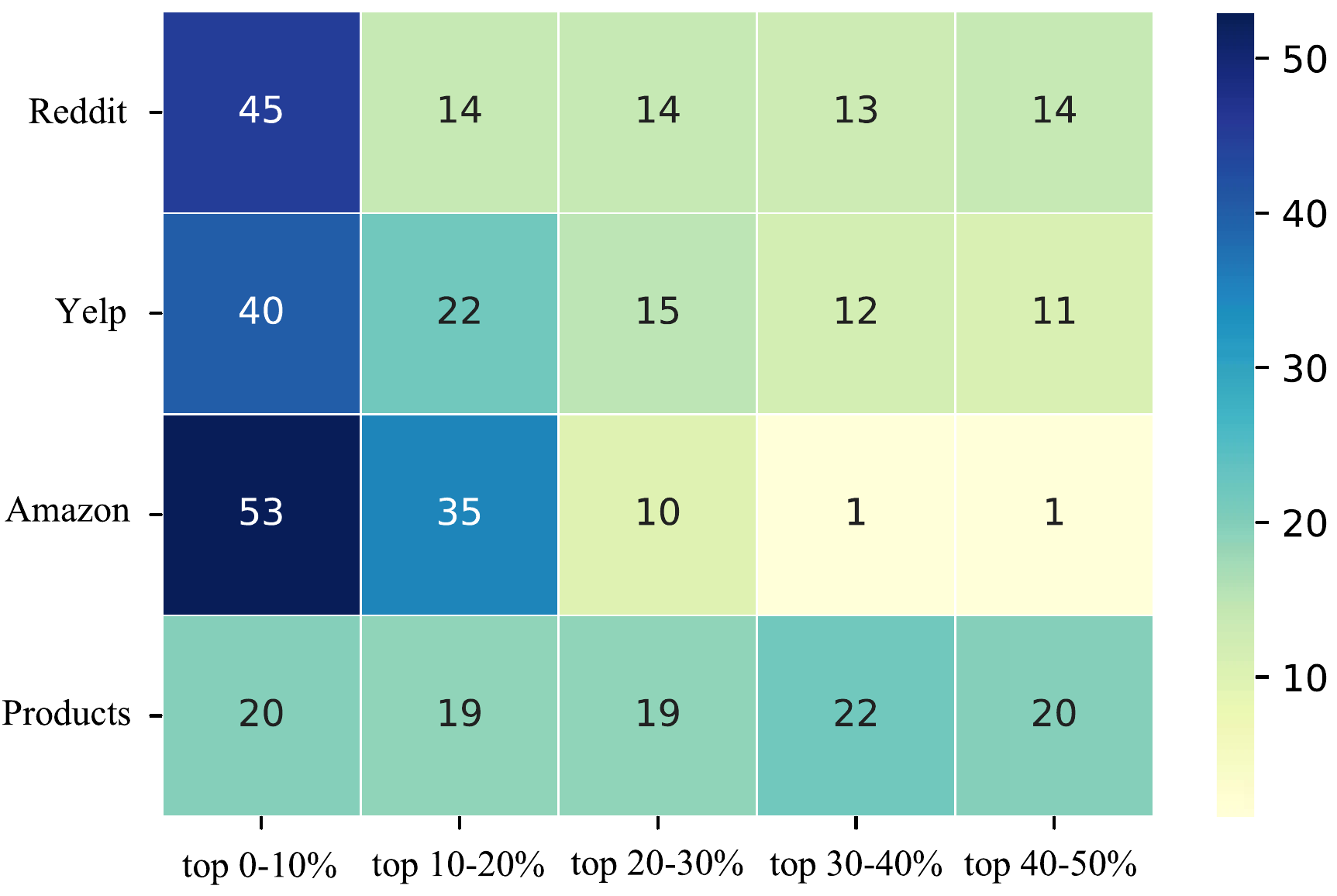}
\caption{Overlap between the \textit{top 10\%} of nodes with high WNH and the \textit{top 50\%} of nodes with large neighbors' amounts on four large-scale graphs. Please note that nodes are independently sorted by two metrics, i.e., WNH and the neighbors' amount. The value ``45'' in the top-left grid denotes that in Reddit, about 45\% of nodes with high WNH are simultaneously the \textit{top 10\%} of nodes with large neighbors' amounts.}
\label{fig4}
\end{figure}

\begin{proposition}
The weighted neighbor heterophily (WNH) is able to distinguish impacts made by individual neighbors compared to the naive neighbor heterophily.
\end{proposition}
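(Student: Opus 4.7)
The plan is to make the informal claim precise by decomposing both quantities into per-neighbor contributions and then exhibiting a pair of neighborhood configurations that $Hete_v$ cannot separate but $WNH_v$ can. First I would rewrite $Hete_v = \sum_{u \in N(v)} h_v(u)$ with $h_v(u) = \frac{1}{D_v}\Vert \mathbf{c}_v - \mathbf{c}_u\Vert_2$, and $WNH_v = \sum_{u \in N(v)} w_v(u)$ with $w_v(u) = \frac{p_{vu}}{D_v}\Vert \mathbf{c}_v - \mathbf{c}_u\Vert_2$. The crucial observation is that $h_v(u)$ depends only on the class label of $u$, so any permutation of neighbors that preserves the multiset of labels leaves $h_v(u)$ values unchanged; by contrast, $w_v(u)$ depends on the edge-specific coefficient $p_{vu}$, which can differ even between neighbors of identical class.

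Building on this decomposition, I would formalize ``able to distinguish'' as follows: call two neighborhood configurations $\mathcal{N}_1,\mathcal{N}_2$ (with the same multiset of neighbor classes but different linking-probability assignments) $Hete$-equivalent if $Hete_v(\mathcal{N}_1)=Hete_v(\mathcal{N}_2)$. I would then construct a concrete counterexample with two heterophilic neighbors $u_1,u_2$ and two homophilic neighbors $u_3,u_4$, swap the linking probabilities between the heterophilic and homophilic pairs to form $\mathcal{N}_2$, and verify directly from Equations~\eqref{Eq4} and~\eqref{Eq8} that the two $Hete_v$ values coincide while the two $WNH_v$ values differ by a nonzero quantity proportional to $(p_{vu_1}-p_{vu_3})\sqrt{2}/D_v$ (in the single-class setting). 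This exhibits at least one configuration pair on which $WNH_v$ is strictly finer than $Hete_v$.

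To anchor the result to the intended application, I would close by pointing back to the ``exceptive cases'' alluded to in Section~\ref{sec:3.1.3}, namely nodes whose heterophilic neighbors are strongly linked and whose homophilic neighbors are weakly linked (or vice versa). In such cases the uniform $1/D_v$ weighting used by $Hete_v$ washes out the edge-strength signal, whereas the $p_{vu}/D_v$ weighting in $WNH_v$ amplifies the contribution of the strongly-linked neighbors in a way consistent with the attention-style message passing that motivated the definition.

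The main obstacle is neither the arithmetic nor the counterexample but the formalization step: the proposition as stated is a qualitative distinguishability claim rather than an equation, and I need to select the right formal notion (``there exist $Hete$-equivalent configurations separated by $WNH_v$'') that is both provable and faithful to the intuition that WNH resolves impacts at the individual-edge level. Once that definition is pinned down, the remaining work is just evaluating Equations~\eqref{Eq4} and~\eqref{Eq8} on the chosen example and reading off the inequality.
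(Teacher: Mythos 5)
Your proposal is correct in substance and rests on the same underlying observation as the paper -- the naive $Hete_v$ in Equation~\eqref{Eq4} is a function of the neighbor labels alone, while $WNH_v$ in Equation~\eqref{Eq8} additionally depends on the edge-specific weights $p_{vu}$ -- but you formalize and instantiate it differently. The paper works at the level of individual per-neighbor contributions within a single neighborhood: it picks $\mathbf{c}_v=[0,0,1,1,0,0]$, $\mathbf{c}_u=[1,1,0,0,0,0]$, $\mathbf{c}_m=[0,0,0,0,1,1]$, so that $u$ and $m$ lie in genuinely different classes yet $\Vert \mathbf{c}_v-\mathbf{c}_u\Vert_2=\Vert \mathbf{c}_v-\mathbf{c}_m\Vert_2$, meaning the naive metric assigns the two neighbors identical contributions and thus cannot tell their impacts apart, whereas $WNH_v$ weights them by $p_{vu}$ versus $p_{vm}$. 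You instead formalize distinguishability at the aggregate level (two $Hete$-equivalent neighborhood configurations separated by $WNH_v$) and separate them by swapping linking probabilities between heterophilic and homophilic neighbors. That reading is provable and your swap construction works (modulo a small arithmetic slip: with both pairs swapped the gap is $\sqrt{2}\,(p_{vu_1}+p_{vu_2}-p_{vu_3}-p_{vu_4})/D_v$, and you must explicitly assume the probabilities are not all equal, an assumption the paper also leaves implicit), but note it is nearly immediate, since $Hete_v$ ignores the probabilities entirely, every reassignment of the $p_{vu}$ is $Hete$-equivalent. The paper's example carries the extra, sharper point that even two neighbors of different classes can be conflated by the unweighted metric because their label-difference norms coincide, which is closer to the proposition's phrase ``impacts made by individual neighbors''; your configuration-level formalization buys a cleaner statement of what ``distinguish'' means, at the cost of demonstrating a slightly weaker (coarser) property.
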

\begin{proof}
We assume that label vectors of v and its two neighbors u and m, i.e., \textbf{c$_v$}, \textbf{c$_u$}, and \textbf{c$_m$} are $\mathrm{[0, 0, 1, 1, 0, 0]}$, $\mathrm{[1, 1, 0, 0, 0, 0]}$, and $\mathrm{[0, 0, 0, 0, 1, 1]}$, respectively. 2-Norm of (\textbf{c$_v$} - \textbf{c}$_u$) and (\textbf{c$_v$} - \textbf{c}$_m$) are equal, implying that impacts made by two individual neighbors cannot be distinguished. However, WNH introduces the linking probability associated with the edge between v and u (or m) to reflect a distinct impact.
\end{proof}

The proposed WNH is utilized to detect the neighbor heterophily for nodes in a large-scale graph. A higher WNH indicates that the neighboring distribution of the node is more heterophilic than a normal one. Therefore, a node with a higher WNH cannot be accurately classified even with a well-trained GNN. We remark that the computation of WNH for all nodes in a large-scale graph is offline performed before sampling, requiring no online resource.

\subsection{The Proposed Method: DropReef} \label{sec:DropReef}
Prior to the release of DropReef, we propose to detect the redundant nodes in a large-scale graph. In section \ref{sec:2.3}, we have conducted a preliminary experiment (given in Figure \ref{fig2}) to naively drop the \textit{top 5\%} of nodes on four large-scale graphs. We found the decline in the model accuracy is slight, and thus argued that the dropped HD nodes contain redundant information to a large extent. 

To support the argument, we conduct experiments to quantify the overlap between HD nodes and nodes with high WNH. For each dataset, we calculate WNH for all nodes and select the \textit{top 10\%} ones sorted by their WNH. Quantification of the overlap is given in Figure \ref{fig4}. Over 70\% of nodes with high WNH are simultaneously HD nodes, indicating that a node with a great number of neighbors will also have a heterophilic neighboring distribution with a high probability. 
Therefore, based on the quantification, we can define the data redundancy and redundant nodes in training large-scale graphs with GNNs.

\noindent \textbf{Definition 3} \textit{Redundancy in data denotes some information benefits little to the model performance (both training efficiency and model accuracy) but takes nontrivial cost for its computation. Therefore, redundant nodes can be further defined as HD nodes with large values in terms of WNH at the same time.}

\begin{algorithm}[t]
 \small
 
  \caption{\textbf{\normalsize\emph{DropReef (Offline Performed)}}} \label{alg1}
 $\texttt{INPUT}~$\emph{\textbf{Original Graph $\mathcal{G}(\mathcal{V}, \mathcal{E})$, Edge Prob. Matrix P,  Hyperparameters $\mathrm{TH}_{WNH}$, $\mathrm{TH}_{DEG}$}}; \\  
 $\texttt{OUTPUT}~$\emph{\textbf{Low-redundancy Graph $\mathcal{G}^{\prime}(\mathcal{V}^{\prime}, \mathcal{E}^{\prime})$}}; \\  

 $\mathcal{V}^{\prime} = \mathcal{V}$;\\
 $\mathcal{V}_{drop} = \varnothing$;\\      
 $\mathcal{V}_{tr} \leftarrow$ \emph{\textbf{get\_training\_nodes}}$(\mathcal{V})$;\\
 
 \For{$v \in \mathcal{V}_{tr}$}{ 
    $WNH_v = 0$;\\                         
    $\mathbf{c}_v \leftarrow$ \emph{\textbf{get\_label}}$(v)$;\\
    $D_{v} \leftarrow$ \emph{\textbf{get\_degree}}$(v)$;\\
    $N_{v} \leftarrow$ \emph{\textbf{get\_neighbors}}$(v)$;\\
    \For{$u \in N_{v}$}{
    $\mathbf{c}_u \leftarrow$ \emph{\textbf{get\_label}}$(u)$;\\
    $p_{vu} \leftarrow$ \emph{\textbf{get\_edge\_prob}}$(v, u)$;\\
    $\textit{WNH}_v = (\sum_u p_{vu} \times \Vert \mathbf{c}_v - \mathbf{c}_u \Vert_2)/D_v$ \tcp*{\blackcircled{1}}
    }
 }
    
    (Save the WNH for further usage and free the storage)

 \For{$v \in \mathcal{V}_{tr}$}{

    \If{$D_v \ge \mathrm{TH}_{DEG}~and~\textit{WNH}_v \ge \mathrm{TH}_{WNH}$}{
        $\mathcal{V}_{drop} \leftarrow \emph{\textbf{add\_node}}(\mathcal{V}_{del}, v)$ \tcp*{\blackcircled{2}}
    }
 }
    
\For{$v \in \mathcal{V}_{drop}$}{
    \For{$u \in N_v$}{
    $\mathcal{E}^{\prime} \leftarrow \emph{\textbf{remove\_edge}}(\mathcal{E}, v, u)$}
    $\mathcal{V}^{\prime} \leftarrow \emph{\textbf{remove\_node}}(v)$ \tcp*{\blackcircled{3}}
}
      
\end{algorithm}

Moreover, it is observed that there is a uniform distribution of high WNH nodes on Products. Nevertheless, the uniform distribution has no effect on dropping redundant nodes. We can adopt a flexible dropping process on all training nodes uniformly by adjusting hyperparameters.

Given two hyperparameters and a precomputed edge probability matrix \textit{\textbf{P}}, DropReef is offline performed on a large-scale graph to drop redundant nodes. A detailed process is given in Algorithm \ref{alg1}. DropReef consists of \textbf{three subprocesses}: computing metrics, detecting redundancy and dropping nodes.

\noindent \blackcircled{1} Computing metrics (i.e., WNH) for all training nodes requires all edge probabilities between training nodes and their neighbors. The probability matrix \textit{\textbf{P}} is precomputed by jointly using a graph auto-encoder (GAE) \cite{VGAE} and a batched probability predictor \cite{deepergcn}. Predicting edge probabilities in a batched manner can generally mitigate the storage consumption of a graph with tens of millions of edges \cite{deepergcn}.

\noindent \blackcircled{2} Detecting redundancy aims to select redundant nodes from all training nodes based on two hyperparameters, i.e., TH$_{WNH}$ and TH$_{DEG}$. Specifically, TH$_{WNH}$ is used to filter nodes with high WNH since such nodes are hard to classify accurately. TH$_{DEG}$ is a restriction on the minimum node degree, ensuring only nodes with a great number of neighbors will be considered to drop. According to two hyperparameters, nodes with high WNH and a great number of neighbors are regarded as redundant nodes.

\noindent \blackcircled{3} Dropping redundant nodes includes removing all edges associated with the redundant nodes and dropping these nodes from the training set. The output is a low-redundancy graph $\mathcal{G}^{\prime}$ that can be directly fed to GNNs for training.  

\noindent \textbf{Remark} We drop redundant nodes on the training set only. All subprocesses in DropReef are offline performed to generate a low-redundancy graph. Training such graphs with some popular GNNs will witness a considerable acceleration straightforwardly.

\section{Experiment}
In this section, we conduct extensive experiments to demonstrate the effectiveness of DropReef. We give a detailed analysis based on the results and further discuss the adjustment of DropReef to offer a way for performing one's own DropReef. In addition, we quantify the sampled subgraphs and reveal the impact of DropReef on the subgraphs.

\subsection{Experimental Setting}

\tabcolsep 2.5pt

\begin{table}[t]
\renewcommand\arraystretch{1.5}
\centering
\caption{Statistics on large-scale graph datasets.} \label{tab1} 
\begin{tabular}{cccccccc}
\bottomrule
\textbf{Dataset} & \textbf{\#Node} & \textbf{\#Edge} & \textbf{Degree} & \textbf{Task Type} & \textbf{Train/Val/Test} \\ \hline
Reddit   & 232,965      & 11,606,919   & 50 & single-cls class.
&  0.66/0.10/0.24  \\
Yelp     & 716,847      & 6,977,410    & 10 & multi-cls class.    &  0.75/0.10/0.15  \\
Amazon   & 1,598,960    & 132,169,734  & 83 & multi-cls class.    &  0.85/0.05/0.10  \\
Products & 2,449,029    & 61,859,140   & 51 & multi-cls class.    &  0.08/0.02/0.90  \\
\bottomrule
\end{tabular}
\end{table}

\begin{figure*}[ht]
\centering
\includegraphics[width=1\textwidth]{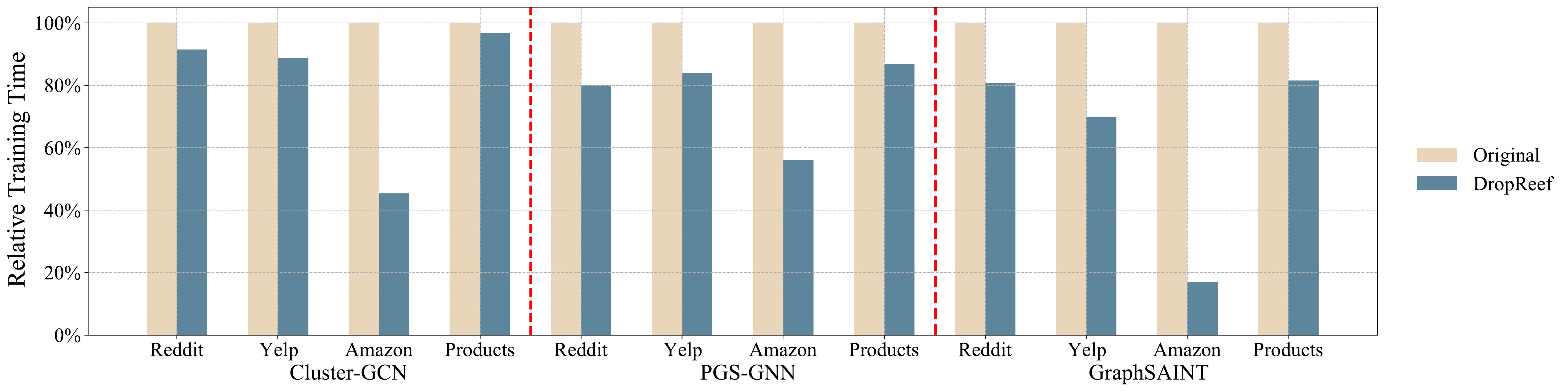}
\caption{Comparison on the training time among four large-scale graphs. We apply DropReef to three sampling-based GNNs to yield a speedup. Please note that \textbf{Original} denotes training the baselines, while \textbf{DropReef} denotes that DropReef was applied before training the baselines. In addition, training configurations between \textbf{Original} and \textbf{DropReef} are completely consistent to the official configurations of the corresponding baseline.}
\label{fig5}
\end{figure*}

\begin{table*}[t]
\centering
\caption{Comparison on the training time and the test accuracy between training original models and with DropReef equipped among four large-scale graphs.} \label{tab2} 
\begin{tabular}{ccrrrrrrrrr}
\bottomrule
\textbf{Model} & \textbf{Dataset} & 
\begin{tabular}[c]{@{}c@{}}\textbf{Training Time(s.)} \\ \textbf{(Original Case)} \end{tabular} &
\begin{tabular}[c]{@{}c@{}}\textbf{Training Time(s.)} \\ \textbf{(Use DropReef)} \end{tabular} &
\begin{tabular}[c]{@{}c@{}}\textbf{Time} \\ \textbf{Reduction} \end{tabular}  &
\begin{tabular}[c]{@{}c@{}}\textbf{Test Accuracy} \\ \textbf{(Original Case)} \end{tabular} &
\begin{tabular}[c]{@{}c@{}}\textbf{Test Accuracy} \\ \textbf{(Use DropReef)} \end{tabular} &
\begin{tabular}[c]{@{}c@{}} \textbf{Accuracy} \\ \textbf{Gap}\end{tabular} &
\begin{tabular}[c]{@{}c@{}} \textbf{Drop Node} \\ \textbf{Ratio}\end{tabular} &
\begin{tabular}[c]{@{}c@{}} \textbf{Drop Edge} \\ \textbf{Ratio}\end{tabular} 
\\ \bottomrule

\multirow{4}{*}{Cluster-GCN} 
 & Reddit    & 175.72  & \textbf{160.65} & \textbf{8.58\%}   & 0.9600  & \textbf{0.9604} & \textbf{$\uparrow$ 0.04\%} & 8.45\% & 29.02\% \\ 
 & Yelp      & 185.17  & \textbf{164.17} & \textbf{11.34\%}   & 0.6099  & \textbf{0.6158} & \textbf{$\uparrow$ 0.97\%} & 18.12\% & 65.12\% \\ 
 & Amazon    & 1400.37  & \textbf{605.04} & \textbf{56.79\%}   & 0.7566  & \textbf{0.7576} & \textbf{$\uparrow$ 0.13\%} & 5.00\% & 77.35\% \\ 
 & Products  & 161.39  & \textbf{156.14} & \textbf{3.26\%} &  0.7474  & \textbf{0.7516} & \textbf{$\uparrow$ 0.56\%} & 7.59\% & 23.46\%\\ \hline

\multirow{4}{*}{PGS-GNN} 
 & Reddit    & 45.80  & \textbf{36.71} & \textbf{19.86\%}   & 0.9549  & \textbf{0.9571} & \textbf{$\uparrow$ 0.02\%} & 5.73\% & 24.43\% \\ 
 & Yelp      & 831.26  & \textbf{697.17} & \textbf{16.13\%}   & 0.6256  & \textbf{0.6280} & \textbf{$\uparrow$ 0.38\%} & 4.54\% & 44.59\% \\ 
 & Amazon    & 4163.68  & \textbf{2338.21} & \textbf{43.84\%}   & 0.7759  & \textbf{0.7849} & \textbf{$\uparrow$ 1.15\%} & 5.00\% & 77.35\% \\ 
 & Products  & 618.75  & \textbf{536.44} & \textbf{13.30\%}   & 0.7389  & \textbf{0.7474} & \textbf{$\uparrow$ 1.15\%}  & 8.02\% & 16.26\%\\ \hline

\multirow{4}{*}{GraphSAINT} 
 & Reddit    & 36.79  & \textbf{29.73} & \textbf{19.19\%}   & 0.9615  & 0.9576 & \textbf{$\downarrow$} 0.41\% & 1.88\% & 9.12\% \\ 
 & Yelp      & 197.22  & \textbf{137.96} & \textbf{30.05\%}   & 0.6412  & 0.6358 & \textbf{$\downarrow$} 0.85\% & 3.05\% & 34.19\% \\ 
 & Amazon    & 1462.98  & \textbf{248.50} & \textbf{83.01\%}   & 0.7650  & \textbf{0.7787} & \textbf{$\uparrow$ 1.79\%} & 5.00\% & 77.35\% \\ 
 & Products  & 628.94  & \textbf{512.76} & \textbf{18.47\%}   & 0.7852  & \textbf{0.7898} & \textbf{$\uparrow$ 0.15\%} & 7.59\% & 23.46\% \\
\bottomrule
\end{tabular}
\end{table*}

\textbf{Datasets and Platform.}
We propose to demonstrate the effectiveness of DropReef in terms of efficiency and accuracy on the node classification task. We conduct experiments on four real-world graphs in large scale: Reddit \cite{graphsage}, Yelp \cite{graphsaint}, Amazon \cite{graphsaint}, and OGBN-Products (abbr. Products) \cite{OGB}. Among these four datasets, the smallest graph has more than two hundred thousand nodes and five million edges, while the largest has more than one and a half million nodes and one hundred million edges. Please see Table \ref{tab1} for more detailed statistics, where ``single-cls class.'' denotes the single-class classification task. All experiments are conducted on a Linux server equipped with dual 24-core Intel Xeon CPU E5-2650 v4 CPUs and an NVIDIA Tesla V100 GPU (16 GB memory).

\textbf{Baselines.} 
We apply DropReef to three popular sampling-based GNNs, i.e., Cluster-GCN \cite{clustergcn}, parallel graph sampling-based GNN (abbr. PGS-GNN) \cite{PGSampling}, and GraphSAINT \cite{graphsaint}, that have the capacity of training large-scale graphs. Specifically, we choose the random node sampler for GraphSAINT to perform sampling. In addition to the hyperparameters of DropReef, all experiments use official training configurations, e.g., sampling size, batch size, and learning rate. We remark that comparisons between training original baselines and with DropReef equipped are conducted under completely consistent configurations other than the input graph since the input graph was processed for dropping redundant nodes by DropReef.

\subsection{Experimental Results}

\begin{figure*}[htbp]
\centering
	\subfigure[Quantification on a sampled subgraph (vanilla case). Note that x and y axes are node indices.]{\includegraphics[width = 0.33\textwidth]{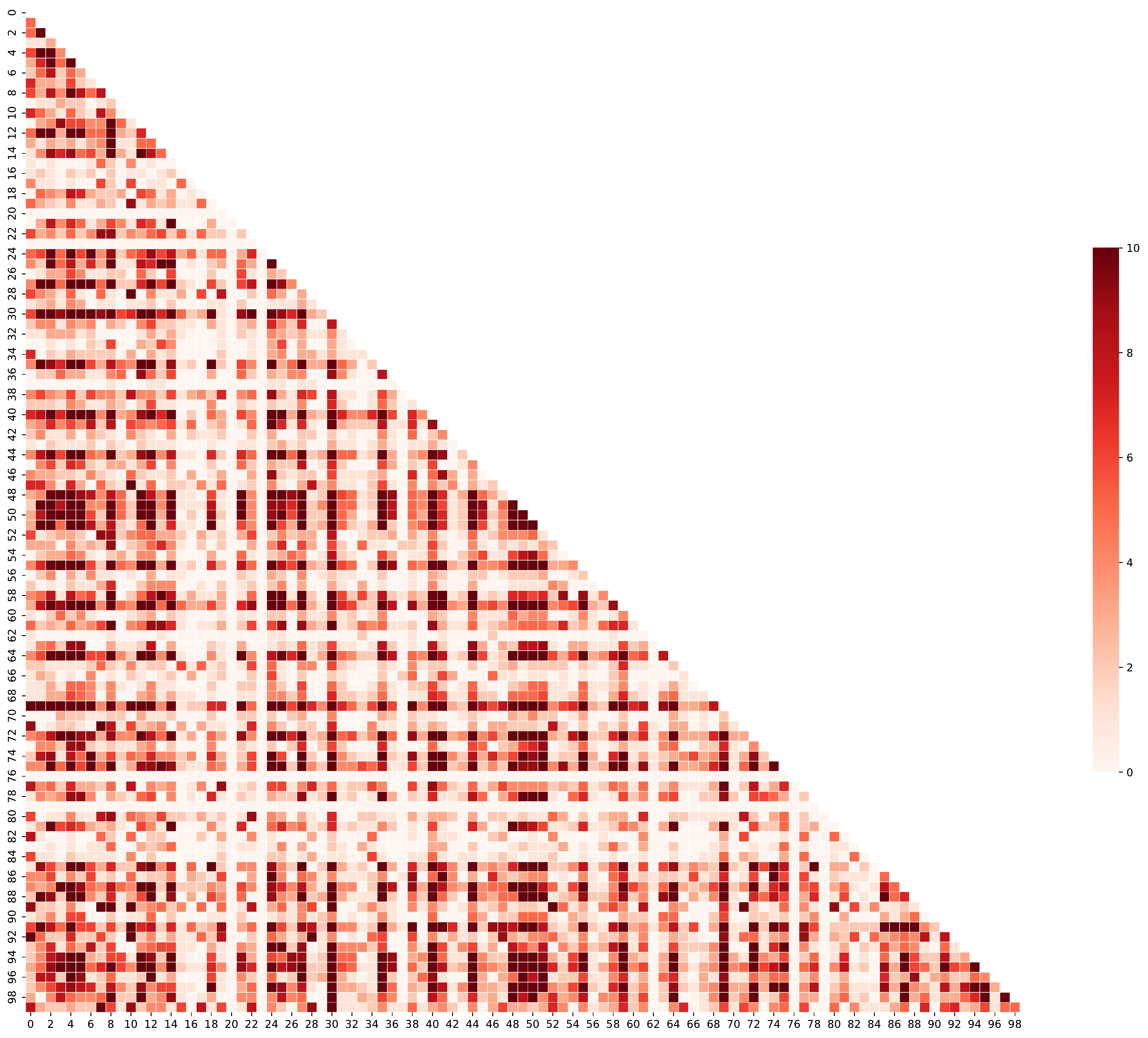}}
	\hspace{3mm}
	\subfigure[Quantification on a sampled subgraph (apply DropReef). Note that x and y axes are node indices.]{\includegraphics[width = 0.33\textwidth]{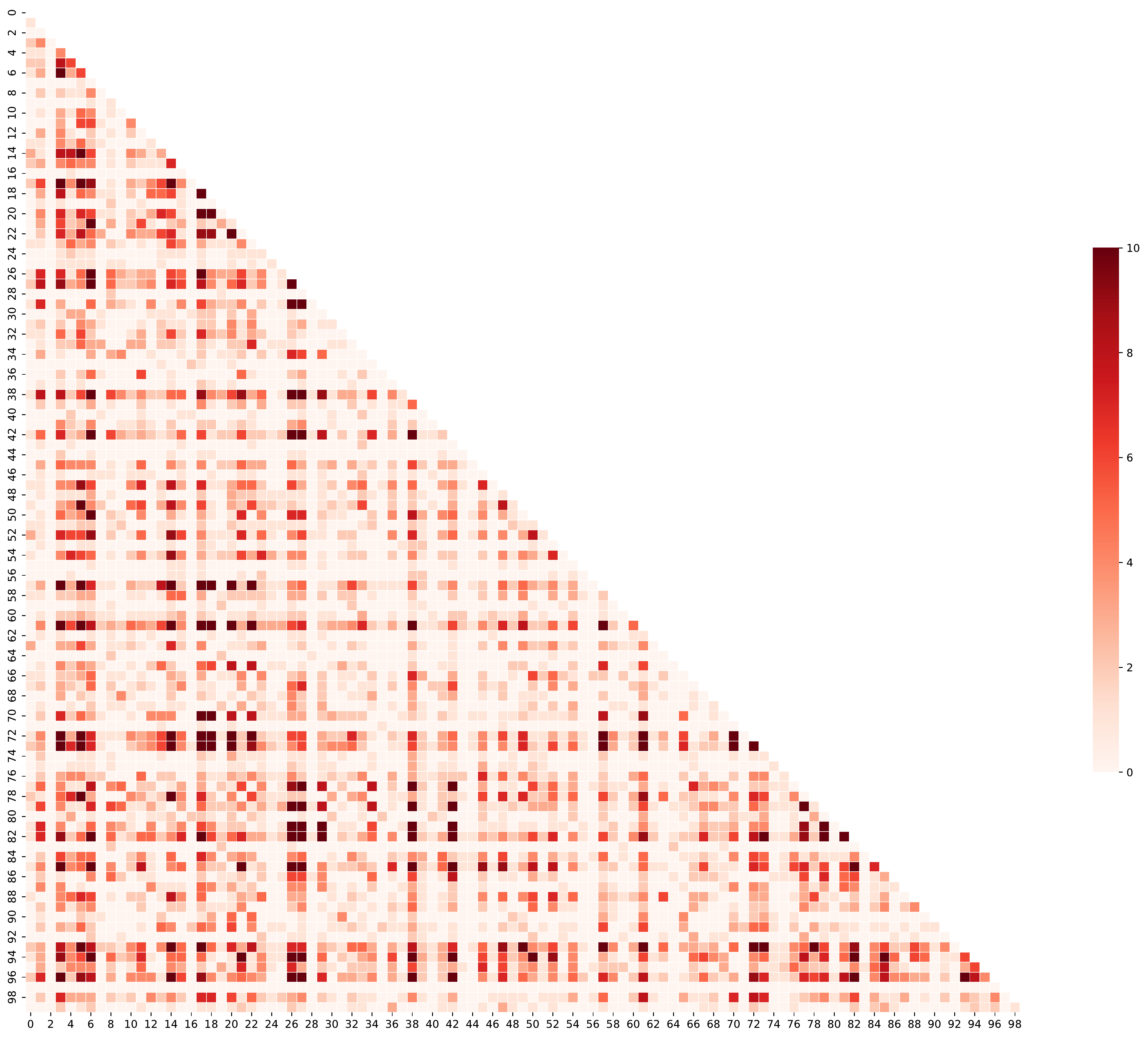}}
	\hspace{1mm}
	\subfigure[Number of shared neighbors in 3$\times$3 node regions in subgraphs.]{\includegraphics[width = 0.24\textwidth]{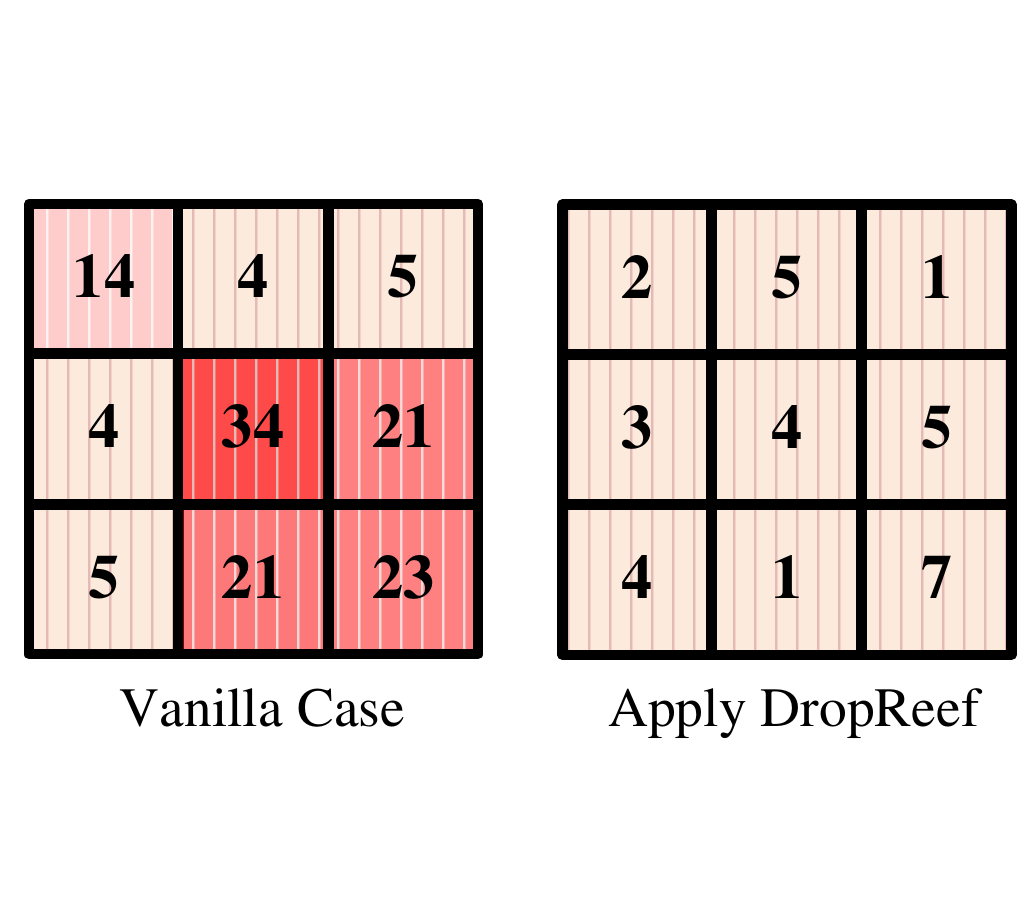}} 
\caption{Quantification of the shared neighbors between two nodes in a graph. A deeper color means a larger number of shared neighbors within a node region.}
\label{fig6}
\end{figure*}

We demonstrate the effectiveness of DropReef on accelerating the training through a comparison of the relative training time as shown in Figure \ref{fig5}. Please note that the training process consists of sampling and pure model training since they are typically mixed in a batched computing process. With the assistance of DropReef, redundant nodes are dropped from the training set of all large-scale graphs, thus yielding a remarkable acceleration in terms of training. Overall, applying DropReef helps reduce on average 26.80\% and up to 83.01\% of the training time. For each baseline, the (training) time reduction varies from approx. 10\% to 80\% among four large-scale graphs. Notably, the time reduction on Amazon dataset is significant. We notice that the time reduction on Products dataset among three baselines is less than on other graphs (given that the size of Products dataset is large) since only 8\% of nodes are split as the training set. A detailed performance is given in Table \ref{tab2}. Experiments are repeated three times to yield the average value.
Since DropReef removed redundant nodes from the training set, we argue that a larger training set of a graph can generally result in a greater gain from DropReef.

With respect to the model accuracy, the reduction of the training time does not bring about a drastic decline in the test accuracy. On the contrary, the test accuracy on most large-scale graphs is promoted after dropping redundancy, which can be attributed to the fact that the dropped redundant nodes are of no benefit to the model accuracy. We remark that DropReef removed redundant nodes from the training set only, leaving the validation and test sets unprocessed. Performance in terms of the time reduction and the test accuracy thereby demonstrates the effectiveness of DropReef in both efficiency and accuracy. Please note that Drop Node Ratio denotes the ratio of the number of dropped nodes to the number of all training nodes, and Drop Edge Ratio denotes the ratio of the number of the removed edges associated with the dropped nodes to the number of edges associated with all training nodes. We have the two metrics discussed in section \ref{sec:discuss}.

\subsection{Impact on Subgraphs}

DropReef has promoted the training efficiency by dropping redundant nodes from the training set, after which sampling and model training is performed on the low-redundancy graph. Since the batched training is conducted on sampled subgraphs, we thereby analyze the impact of DropReef on sampled subgraphs. Given the stochasticity of sampling, sampled subgraphs inevitably include dense node regions in a vanilla case. Such regions introduce considerable training time and are supposed to be dropped by DropReef. 

\begin{wrapfigure}{l}{4.8cm} 
\includegraphics[width=4.5cm]{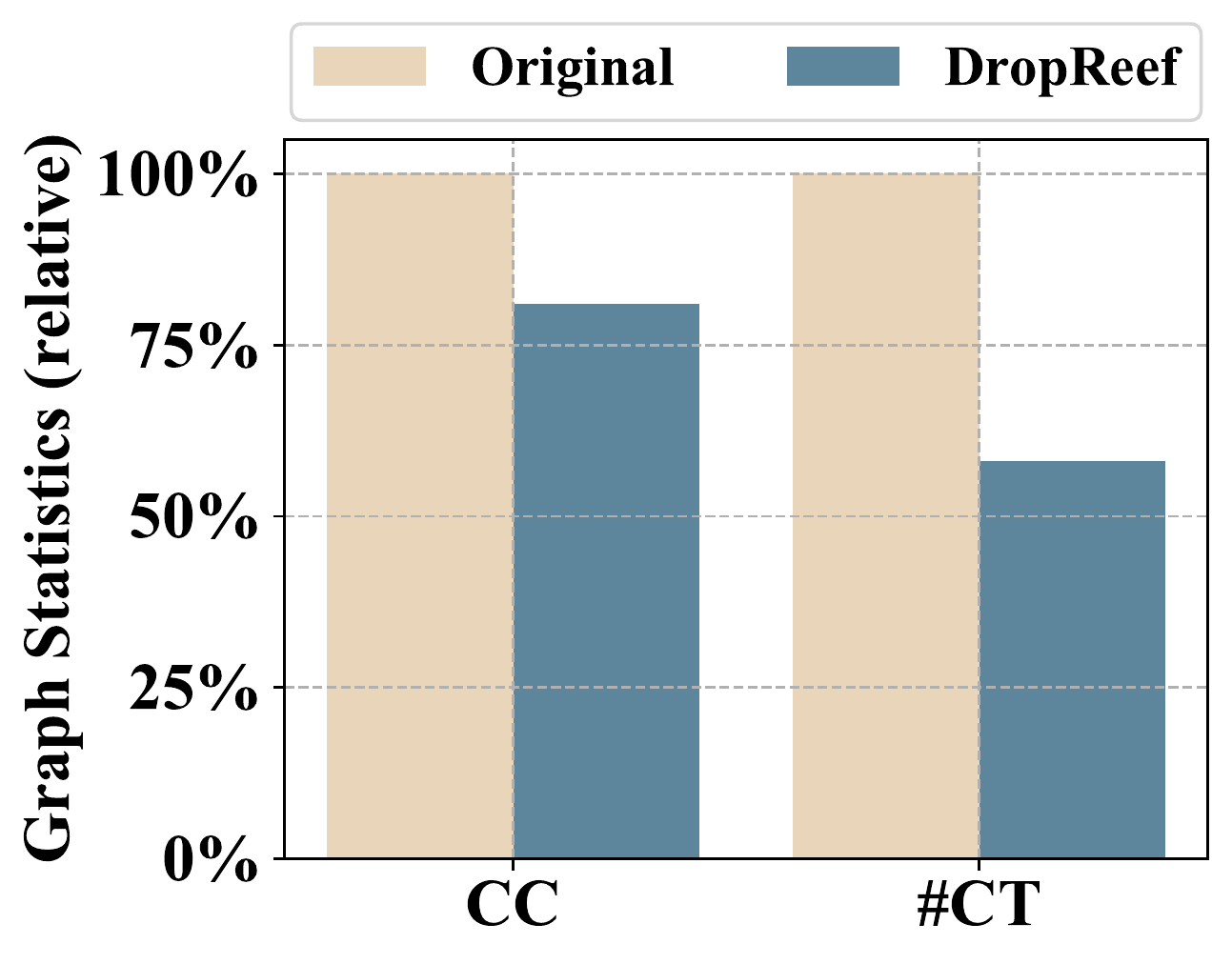}
\caption{Quantization on graph statistics.}
\label{fig7}
\end{wrapfigure}

To analyze a subgraph, we first conduct sampling on Amazon via a random node sampler provided in GraphSAINT. We sample two subgraphs in the same size for a vanilla case (regular sampling) and our case (apply DropReef before sampling) analysis. 
For each sampled subgraph, we compute the number of shared neighbors for any two nodes in the subgraph. For instance, given two nodes $v$ and $u$, n$_{vu}$ denotes the number of shared neighbors between $v$ and $u$. A large n$_{vu}$ indicates that a dense node region exists in the neighboring area of $v$ and $u$. As illustrated in Figure \ref{fig6} (a)\&(b), after applying DropReef, n$_{vu}$ in the sampled subgraph has a more balanced distribution than a vanilla case. The number of dense node regions in the sampled low-redundancy subgraph is less than in the vanilla subgraph, as suggested by two exemplars of quantified results of subgraphs in two cases (given in Figure \ref{fig6} (c)). We also quantify graph statistics, i.e., clustering coefficient (abbr. CC) and the number of closed triads (abbr. \#CT), of sampled subgraphs in two cases via Stanford Network Analysis Platform (SNAP) \cite{SNAP}. CC is used to measure the cliquishness of a typical neighborhood in a graph \cite{CC}. \#CT is used to reveal the number of potential dense connections in a node region composed of triad structures (among three nodes with any two of them connected) \cite{NTC}. We compute two statistics (average) based on 1000 sampled subgraphs in two cases. As illustrated in Figure \ref{fig7}, the sampled low-redundancy subgraphs have a more balanced distribution than vanilla ones from the graph statistical perspective.

\subsection{Discussion} \label{sec:discuss}
In addition to the size of the training set, another factor that affects the time reduction is the hyperparameters used in DropReef. Drop Node Ratio and Drop Edge Ratio are metrics controlled by these hyperparameters. As previously introduced in section \ref{sec:DropReef}, DropReef detects redundant nodes with the help of two thresholds, i.e., TH$_{WNH}$ and TH$_{DEG}$. TH$_{WNH}$ is used to select nodes with high WNH, while TH$_{DEG}$ is used to select nodes with a great number of neighbors. Nodes concurrently exceeding the two thresholds are regarded as redundant nodes. With respect to the adjustment of the two thresholds, we have analyzed the upper bound of $Hete_v$ in Proposition \ref{prop1}. WNH can be viewed as a weighted version of $Hete_v$ and will not exceed the upper bound of $Hete_v$. Moreover, we have quantified the average number of neighbors for each quintile of these HD nodes in Figure \ref{fig1}. The average degree of a graph is also a key value to construct the bound for TH$_{DEG}$. One can perform a unique DropReef on large-scale graphs by adjusting the hyperparameters based on the provided materials. We remark that turning down the two thresholds by a big margin can yield a more significant acceleration of the training time but will result in an undesirable decline in the model accuracy. Therefore, adjusting hyperparameters has been a trade-off between time and accuracy.

\section{Related Work}
Related work of the proposed DropReef contains three main veins, i.e., graph sampling, graph sparsification, and dropout. 

\textbf{Graph Sampling:}
Graph sampling methods are popularly utilized to improve the training efficiency of GNNs. These methods sample nodes and construct a subgraph in each mini-batch, making it possible that the batch-performed GNN training takes less computation and storage cost than the full-batch training. 
Typically, graph sampling methods designed to train homogeneous graphs can be classified into three categories \cite{survey3}, i.e., node-wise, layer-wise, and subgraph-based sampling methods. Node-wise sampling methods, such as GraphSAGE \cite{graphsage}, sample a fixed fraction of neighbors for each node in a graph in a random manner. They reduce the training cost by restricting the sampling size. Layer-wise sampling methods, e.g., FastGCN \cite{fastgcn}, AS-GCN \cite{asgcn}, and LADIES \cite{LADIES}, simultaneously sample a fixed number of nodes by layers to train a layered GNN in a top-down manner. As the improvement of node-wise sampling methods, layer-wise sampling methods omit the focus on the neighborhoods of one single node in a graph, at the same time, perform according to the pre-computed sampling probability. Subgraph-based sampling methods, e.g., Cluster-GCN \cite{clustergcn} and GraphSAINT \cite{graphsaint}, sample subgraphs in each mini-batch by using graph partition algorithms or conditionally sampled nodes and edges, extending GNN training to large-scale graphs. DropReef is flexible to be equipped with mainstream sampling-based GNNs, boosting the performance of existing state-of-the-art models. Considering the drastically growing cost of training large-scale graphs, DropReef is capable of improving the efficiency and even the accuracy of sampling-based GNNs.

\textbf{Graph Sparsification:} 
In the GNN-related domain, graph sparsification methods propose to remove particular edges in a graph to improve the model accuracy or reduce the redundant computation. Typically, DropEdge \cite{dropedge} randomly removes edges to avoid over-fitting and over-smoothing issues. AdaptiveGCN \cite{adaptivegcn} and NeuralSparse \cite{NeuralSparse} use learnable models to remove task-irrelevant edges. GAUG \cite{GAUG} and PTDNet \cite{learningTOdrop} predict the quality of edges for the purpose of denoising, with neural networks leveraged. The above insightful efforts provide the promotion in terms of model accuracy. Moreover, UGS \cite{UGS} prunes both edges and the model weight to yield speedup in GNN inference. FastGAT \cite{fastgat} proposes to sparsify a graph to reduce attention coefficients' amount for attention-based GNNs by significantly removing useless edges. DropReef primarily distinguishes from graph sparsification methods as we drop redundant nodes rather than a considerable amount of edges in a graph. In addition, DropReef is a once-for-all procedure that requires no online resource to train a special model for sparsifying the graph. 
Furthermore, most existing graph sparsification methods generally conduct experiments on small datasets, while DropReef aims for the acceleration of training large-scale graphs. 

\textbf{Dropout:} In the deep learning domain, Dropout is proposed to randomly omit hidden units in a neural network layer for mitigating the over-smoothing issue \cite{Dropout1}. As an easy-to-use technique, Dropout is popularly utilized in diverse neural network models and achieves significant improvements compared to other regularization methods \cite{Dropout2}. DropReef can be considered a variant of Dropout customized for the graph domain. In our case, the objective for dropout is redundant nodes, which is a unique component of the graph data, compared to hidden units in a vanilla case. Moreover, the major function of Dropout is to mitigate the over-smoothing issue in the model training, while DropReef has succeeded in achieving a two-fold function, i.e., both the high efficiency and accuracy are well ensured.
\section{Conclusion}
In this paper, we propose a once-for-all method, termed DropReef, to detect and drop the redundancy in large-scale graphs. By offline performing DropReef, redundant nodes together with their associated edges are dropped from the training set of a graph. We verify the effectiveness of DropReef using three popular sampling-based GNNs that have the capacity of training large-scale graphs. Experimental results demonstrate that DropReef succeeds in improving the efficiency of training large-scale graphs by GNNs, through dropping redundancy. As the scale of real-world graph data rapidly grows by the day, DropReef can be opportunely used to transform a large-scale graph into a low-redundancy one, benefiting the state-of-the-art methods designed for training large-scale graphs to a significant extent.

\bibliographystyle{./IEEEtran}
\bibliography{./Reference}

\begin{thebibliography}{10}
\providecommand{\url}[1]{#1}
\csname url@samestyle\endcsname
\providecommand{\newblock}{\relax}
\providecommand{\bibinfo}[2]{#2}
\providecommand{\BIBentrySTDinterwordspacing}{\spaceskip=0pt\relax}
\providecommand{\BIBentryALTinterwordstretchfactor}{4}
\providecommand{\BIBentryALTinterwordspacing}{\spaceskip=\fontdimen2\font plus
\BIBentryALTinterwordstretchfactor\fontdimen3\font minus
  \fontdimen4\font\relax}
\providecommand{\BIBforeignlanguage}[2]{{%
\expandafter\ifx\csname l@#1\endcsname\relax
\typeout{** WARNING: IEEEtran.bst: No hyphenation pattern has been}%
\typeout{** loaded for the language `#1'. Using the pattern for}%
\typeout{** the default language instead.}%
\else
\language=\csname l@#1\endcsname
\fi
#2}}
\providecommand{\BIBdecl}{\relax}
\BIBdecl

\bibitem{scarselli2008graph}
F.~Scarselli, M.~Gori, A.~C. Tsoi, M.~Hagenbuchner, and G.~Monfardini, ``The
  graph neural network model,'' \emph{IEEE transactions on neural networks},
  vol.~20, no.~1, pp. 61--80, 2008.

\bibitem{graphsage}
W.~L. Hamilton, R.~Ying, and et~al., ``Inductive representation learning on
  large graphs,'' in \emph{Proceedings of the 31st International Conference on
  Neural Information Processing Systems}, 2017, pp. 1025--1035.

\bibitem{gnn_app2}
Z.~Wu, S.~Pan, and et~al., ``Connecting the dots: Multivariate time series
  forecasting with graph neural networks,'' in \emph{Proceedings of the 26th
  ACM SIGKDD International Conference on Knowledge Discovery \& Data Mining},
  2020, pp. 753--763.

\bibitem{gnn_app4}
M.~Schlichtkrull, T.~N. Kipf, and et~al., ``Modeling relational data with graph
  convolutional networks,'' in \emph{European semantic web conference}.\hskip
  1em plus 0.5em minus 0.4em\relax Springer, 2018, pp. 593--607.

\bibitem{gnn_app_security}
Y.~Liu, Z.~Li, S.~Pan, C.~Gong, C.~Zhou, and G.~Karypis, ``Anomaly detection on
  attributed networks via contrastive self-supervised learning,'' \emph{IEEE
  transactions on neural networks and learning systems}, vol.~33, no.~6, pp.
  2378--2392, 2021.

\bibitem{pope2019explainability}
P.~E. Pope, S.~Kolouri, M.~Rostami, C.~E. Martin, and H.~Hoffmann,
  ``Explainability methods for graph convolutional neural networks,'' in
  \emph{Proceedings of the IEEE/CVF Conference on Computer Vision and Pattern
  Recognition}, 2019, pp. 10\,772--10\,781.

\bibitem{zhang2022trustworthy}
H.~Zhang, B.~Wu, X.~Yuan, S.~Pan, H.~Tong, and J.~Pei, ``Trustworthy graph
  neural networks: Aspects, methods and trends,'' \emph{arXiv preprint
  arXiv:2205.07424}, 2022.

\bibitem{kipf2016semi}
T.~N. Kipf and M.~Welling, ``Semi-supervised classification with graph
  convolutional networks,'' in \emph{International Conference on Learning
  Representations {ICLR} 2017}, 2017.

\bibitem{velickovic2018graph}
P.~Veli{\v{c}}kovi{\'{c}}, G.~Cucurull, A.~Casanova, A.~Romero, P.~Li{\`{o}},
  and Y.~Bengio, ``Graph attention networks,'' in \emph{International
  Conference on Learning Representations {ICLR} 2018}, 2018.

\bibitem{xu2018how}
K.~Xu, W.~Hu, J.~Leskovec, and S.~Jegelka, ``How powerful are graph neural
  networks?'' in \emph{International Conference on Learning Representations
  {ICLR} 2019}, 2019.

\bibitem{clustergcn}
W.-L. Chiang, X.~Liu, S.~Si, Y.~Li, S.~Bengio, and C.-J. Hsieh, ``Cluster-gcn:
  An efficient algorithm for training deep and large graph convolutional
  networks,'' in \emph{Proceedings of the 25th ACM SIGKDD International
  Conference on Knowledge Discovery \& Data Mining}, 2019, pp. 257--266.

\bibitem{survey1}
S.~Abadal, A.~Jain, R.~Guirado, J.~L{\'o}pez-Alonso, and E.~Alarc{\'o}n,
  ``Computing graph neural networks: A survey from algorithms to
  accelerators,'' \emph{ACM Computing Surveys (CSUR)}, vol.~54, no.~9, pp.
  1--38, 2021.

\bibitem{survey2}
X.~Liu, M.~Yan, L.~Deng, G.~Li, X.~Ye, D.~Fan, S.~Pan, and Y.~Xie, ``Survey on
  graph neural network acceleration: An algorithmic perspective,'' \emph{arXiv
  preprint arXiv:2202.04822}, 2022.

\bibitem{fastgcn}
J.~Chen, T.~Ma, and C.~Xiao, ``Fastgcn: Fast learning with graph convolutional
  networks via importance sampling,'' in \emph{International Conference on
  Learning Representations}, 2018.

\bibitem{asgcn}
W.~Huang, T.~Zhang, Y.~Rong, and J.~Huang, ``Adaptive sampling towards fast
  graph representation learning,'' \emph{Advances in Neural Information
  Processing Systems}, vol.~31, pp. 4558--4567, 2018.

\bibitem{LADIES}
D.~Zou, Z.~Hu, and et~al., ``Layer-dependent importance sampling for training
  deep and large graph convolutional networks,'' \emph{Advances in neural
  information processing systems}, vol.~32, 2019.

\bibitem{RWT}
J.~Bai and et~al., ``Ripple walk training: A subgraph-based training framework
  for large and deep graph neural network,'' in \emph{2021 International Joint
  Conference on Neural Networks}.\hskip 1em plus 0.5em minus 0.4em\relax IEEE,
  2021, pp. 1--8.

\bibitem{graphsaint}
H.~Zeng, H.~Zhou, A.~Srivastava, R.~Kannan, and V.~Prasanna, ``{GraphSAINT}:
  Graph sampling based inductive learning method,'' in \emph{International
  Conference on Learning Representations}, 2020.

\bibitem{AC-sampling}
A.~Hasanzadeh, E.~Hajiramezanali, S.~Boluki, M.~Zhou, N.~Duffield,
  K.~Narayanan, and X.~Qian, ``Bayesian graph neural networks with adaptive
  connection sampling,'' in \emph{International conference on machine
  learning}.\hskip 1em plus 0.5em minus 0.4em\relax PMLR, 2020, pp. 4094--4104.

\bibitem{MV-sampling}
W.~Cong, R.~Forsati, and et~al., ``Minimal variance sampling with provable
  guarantees for fast training of graph neural networks,'' in \emph{Proceedings
  of the 26th ACM SIGKDD International Conference on Knowledge Discovery \&
  Data Mining}, 2020, pp. 1393--1403.

\bibitem{Bandit-sampling}
Z.~Liu, Z.~Wu, Z.~Zhang, J.~Zhou, S.~Yang, L.~Song, and Y.~Qi, ``Bandit
  samplers for training graph neural networks,'' \emph{Advances in Neural
  Information Processing Systems}, vol.~33, pp. 6878--6888, 2020.

\bibitem{Biased-sampling}
Q.~Zhang, D.~Wipf, Q.~Gan, and L.~Song, ``A biased graph neural network sampler
  with near-optimal regret,'' \emph{Advances in Neural Information Processing
  Systems}, vol.~34, 2021.

\bibitem{PGSampling}
H.~Zeng, H.~Zhou, and et~al., ``Accurate, efficient and scalable graph
  embedding,'' in \emph{2019 IEEE International Parallel and Distributed
  Processing Symposium (IPDPS)}.\hskip 1em plus 0.5em minus 0.4em\relax IEEE,
  2019, pp. 462--471.

\bibitem{smalldataset}
P.~Sen, G.~Namata, M.~Bilgic, L.~Getoor, B.~Galligher, and T.~Eliassi-Rad,
  ``Collective classification in network data,'' \emph{AI magazine}, vol.~29,
  no.~3, pp. 93--93, 2008.

\bibitem{Pagraph}
Z.~Lin, C.~Li, Y.~Miao, Y.~Liu, and Y.~Xu, ``Pagraph: Scaling gnn training on
  large graphs via computation-aware caching,'' in \emph{Proceedings of the
  11th ACM Symposium on Cloud Computing}, 2020, pp. 401--415.

\bibitem{survey3}
X.~Liu, M.~Yan, and et~al., ``Sampling methods for efficient training of graph
  convolutional networks: A survey,'' \emph{IEEE/CAA Journal of Automatica
  Sinica}, vol.~9, no.~2, pp. 205--234, 2021.

\bibitem{robustGNN1}
L.~Wang, W.~Yu, W.~Wang, W.~Cheng, W.~Zhang, H.~Zha, X.~He, and H.~Chen,
  ``Learning robust representations with graph denoising policy network,'' in
  \emph{2019 IEEE International Conference on Data Mining (ICDM)}.\hskip 1em
  plus 0.5em minus 0.4em\relax IEEE, 2019, pp. 1378--1383.

\bibitem{redundancy-GNN1}
Z.~Jia, S.~Lin, R.~Ying, J.~You, J.~Leskovec, and A.~Aiken, ``Redundancy-free
  computation for graph neural networks,'' in \emph{Proceedings of the 26th ACM
  SIGKDD International Conference on Knowledge Discovery \& Data Mining}, 2020,
  pp. 997--1005.

\bibitem{robustGNN2}
D.~Luo, W.~Cheng, W.~Yu, B.~Zong, J.~Ni, H.~Chen, and X.~Zhang, ``Learning to
  drop: Robust graph neural network via topological denoising,'' in
  \emph{Proceedings of the 14th ACM International Conference on Web Search and
  Data Mining}, 2021, pp. 779--787.

\bibitem{zhang2022understanding}
H.~Zhang, J.~Reed, H.~Ritter, Z.~DeVito, Z.~Yu, T.~Karaletsos, H.~He, G.~Dai,
  G.~Huang, A.~Ussery \emph{et~al.}, ``Understanding gnn computational graph: A
  coordinated computation, io, and memory perspective,'' \emph{Proceedings of
  Machine Learning and Systems}, vol.~4, 2022.

\bibitem{robustGNN3}
W.~Jin, Y.~Ma, X.~Liu, X.~Tang, S.~Wang, and J.~Tang, ``Graph structure
  learning for robust graph neural networks,'' in \emph{Proceedings of the 26th
  ACM SIGKDD International Conference on Knowledge Discovery \& Data Mining},
  2020, pp. 66--74.

\bibitem{robustGNN4}
N.~Entezari, S.~A. Al-Sayouri, A.~Darvishzadeh, and E.~E. Papalexakis, ``All
  you need is low (rank) defending against adversarial attacks on graphs,'' in
  \emph{Proceedings of the 13th International Conference on Web Search and Data
  Mining}, 2020, pp. 169--177.

\bibitem{survey4}
Z.~Wu, S.~Pan, F.~Chen, G.~Long, C.~Zhang, and S.~Y. Philip, ``A comprehensive
  survey on graph neural networks,'' \emph{IEEE transactions on neural networks
  and learning systems}, vol.~32, no.~1, pp. 4--24, 2020.

\bibitem{gnn_app3}
S.~Wan, C.~Gong, P.~Zhong, S.~Pan, G.~Li, and J.~Yang, ``Hyperspectral image
  classification with context-aware dynamic graph convolutional network,''
  \emph{IEEE Transactions on Geoscience and Remote Sensing}, vol.~59, no.~1,
  pp. 597--612, 2020.

\bibitem{gnn_prediction}
M.~Zhang and et~al., ``Link prediction based on graph neural networks,''
  \emph{Advances in neural information processing systems}, vol.~31, 2018.

\bibitem{OGB}
W.~Hu, M.~Fey, M.~Zitnik, Y.~Dong, H.~Ren, B.~Liu, M.~Catasta, and J.~Leskovec,
  ``Open graph benchmark: Datasets for machine learning on graphs,''
  \emph{Advances in neural information processing systems}, vol.~33, pp.
  22\,118--22\,133, 2020.

\bibitem{chen2013detecting}
Q.~Chen and et~al., ``Detecting local community structures in complex networks
  based on local degree central nodes,'' \emph{Physica A: Statistical Mechanics
  and its Applications}, vol. 392, no.~3, pp. 529--537, 2013.

\bibitem{clauset2005finding}
A.~Clauset, ``Finding local community structure in networks,'' \emph{Physical
  review E}, vol.~72, no.~2, p. 026132, 2005.

\bibitem{mcpherson2001birds}
M.~McPherson, L.~Smith-Lovin, and J.~M. Cook, ``Birds of a feather: Homophily
  in social networks,'' \emph{Annual review of sociology}, vol.~27, no.~1, pp.
  415--444, 2001.

\bibitem{Networks}
M.~Newman, \emph{Networks}.\hskip 1em plus 0.5em minus 0.4em\relax Oxford
  university press, 2018.

\bibitem{LGNN}
Z.~Chen, L.~Li, and J.~Bruna, ``Supervised community detection with line graph
  neural networks,'' in \emph{7th International Conference on Learning
  Representations, {ICLR} 2019,}.\hskip 1em plus 0.5em minus 0.4em\relax
  OpenReview.net, 2019.

\bibitem{beyond_homo}
J.~Zhu, Y.~Yan, L.~Zhao, M.~Heimann, L.~Akoglu, and D.~Koutra, ``Beyond
  homophily in graph neural networks: Current limitations and effective
  designs,'' \emph{Advances in Neural Information Processing Systems}, vol.~33,
  pp. 7793--7804, 2020.

\bibitem{MulC-imbalanced}
M.~Shi, Y.~Tang, and et~al., ``Multi-class imbalanced graph convolutional
  network learning,'' in \emph{Proceedings of the Twenty-Ninth International
  Joint Conference on Artificial Intelligence (IJCAI-20)}, 2020.

\bibitem{LinkPredSurvey}
M.~A. Hasan and M.~J. Zaki, ``A survey of link prediction in social networks,''
  in \emph{Social network data analytics}, 2011, pp. 243--275.

\bibitem{LinkPred_1}
D.~Liben-Nowell and J.~Kleinberg, ``The link-prediction problem for social
  networks,'' \emph{Journal of the American society for information science and
  technology}, vol.~58, no.~7, pp. 1019--1031, 2007.

\bibitem{LinkPred_2}
M.~Zhang and et~al., ``Link prediction based on graph neural networks,''
  \emph{Advances in neural information processing systems}, vol.~31, 2018.

\bibitem{VGAE}
T.~N. Kipf and M.~Welling, ``Variational graph auto-encoders,'' \emph{arXiv
  preprint arXiv:1611.07308}, 2016.

\bibitem{deepergcn}
G.~Li, C.~Xiong, A.~Thabet, and B.~Ghanem, ``Deepergcn: All you need to train
  deeper gcns,'' \emph{arXiv preprint arXiv:2006.07739}, 2020.

\bibitem{SNAP}
J.~Leskovec and R.~Sosi{\v{c}}, ``Snap: A general-purpose network analysis and
  graph-mining library,'' \emph{ACM Transactions on Intelligent Systems and
  Technology}, vol.~8, no.~1, p.~1, 2016.

\bibitem{CC}
D.~J. Watts and S.~H. Strogatz, ``Collective dynamics of
  ‘small-world’networks,'' \emph{nature}, vol. 393, no. 6684, pp. 440--442,
  1998.

\bibitem{NTC}
D.~Easley and J.~Kleinberg, \emph{Networks, crowds, and markets: Reasoning
  about a highly connected world}.\hskip 1em plus 0.5em minus 0.4em\relax
  Cambridge university press, 2010.

\bibitem{dropedge}
Y.~Rong, W.~Huang, T.~Xu, and J.~Huang, ``Dropedge: Towards deep graph
  convolutional networks on node classification,'' \emph{arXiv preprint
  arXiv:1907.10903}, 2019.

\bibitem{adaptivegcn}
D.~Li, T.~Yang, L.~Du, Z.~He, and L.~Jiang, ``Adaptivegcn: Efficient gcn
  through adaptively sparsifying graphs,'' in \emph{Proceedings of the 30th ACM
  International Conference on Information \& Knowledge Management}, 2021, pp.
  3206--3210.

\bibitem{NeuralSparse}
C.~Zheng, B.~Zong, W.~Cheng, and et~al., ``Robust graph representation learning
  via neural sparsification,'' in \emph{ICML}, 2020, pp. 11\,458--11\,468.

\bibitem{GAUG}
T.~Zhao, Y.~Liu, L.~Neves, O.~Woodford, M.~Jiang, and N.~Shah, ``Data
  augmentation for graph neural networks,'' \emph{arXiv preprint
  arXiv:2006.06830}, 2020.

\bibitem{learningTOdrop}
D.~Luo, W.~Cheng, W.~Yu, B.~Zong, J.~Ni, H.~Chen, and X.~Zhang, ``Learning to
  drop: Robust graph neural network via topological denoising,'' in
  \emph{Proceedings of the 14th ACM International Conference on Web Search and
  Data Mining}, 2021, pp. 779--787.

\bibitem{UGS}
T.~Chen, Y.~Sui, X.~Chen, A.~Zhang, and Z.~Wang, ``A unified lottery ticket
  hypothesis for graph neural networks,'' in \emph{International Conference on
  Machine Learning}.\hskip 1em plus 0.5em minus 0.4em\relax PMLR, 2021, pp.
  1695--1706.

\bibitem{fastgat}
R.~S. Srinivasa, C.~Xiao, L.~Glass, J.~Romberg, and J.~Sun, ``Fast graph
  attention networks using effective resistance based graph sparsification,''
  \emph{arXiv preprint arXiv:2006.08796}, 2020.

\bibitem{Dropout1}
G.~E. Hinton, N.~Srivastava, A.~Krizhevsky, I.~Sutskever, and R.~R.
  Salakhutdinov, ``Improving neural networks by preventing co-adaptation of
  feature detectors,'' \emph{arXiv preprint arXiv:1207.0580}, 2012.

\bibitem{Dropout2}
N.~Srivastava, G.~Hinton, A.~Krizhevsky, I.~Sutskever, and R.~Salakhutdinov,
  ``Dropout: a simple way to prevent neural networks from overfitting,''
  \emph{The journal of machine learning research}, vol.~15, no.~1, pp.
  1929--1958, 2014.

\end{thebibliography}

\end{document}